\documentclass{article}
\usepackage{amsmath}


\usepackage[nonatbib,final]{neurips_2024}




\usepackage[utf8]{inputenc} 
\usepackage[T1]{fontenc}    
\usepackage[hidelinks]{hyperref}       
\usepackage{url}            
\usepackage{booktabs}       
\usepackage{amsfonts}       
\usepackage{nicefrac}       
\usepackage{microtype}      
\usepackage{xcolor}         

\usepackage{amssymb}
\usepackage{bbm}
\usepackage{amsmath}
\usepackage{amsthm}

\usepackage{ragged2e}
\usepackage{enumitem}
\usepackage{siunitx}
\usepackage{graphicx}
\usepackage[justification=justified]{caption}
\usepackage{subcaption}
\usepackage{multicol}
\usepackage{multirow}
\usepackage{url}
\usepackage{wrapfig}
\usepackage{comment}
\usepackage{footmisc}

\usepackage{caption}
\captionsetup[table]{skip=0pt}
\captionsetup[figure]{skip=0pt}
\captionsetup[subfigure]{skip=0pt}
\captionsetup{belowskip=0pt}

\usepackage{titlesec}

\bibliographystyle{ieeetr}

\theoremstyle{plain}
\newtheorem{theorem}{Theorem}
\newtheorem{proposition}[theorem]{Proposition}

\newtheorem{corollary}[theorem]{Corollary}

\newcommand{\scientificNotation}[2]{${#1}\times10^{{#2}}$}
\newcommand{\specialcell}[2][c]{%
  \begin{tabular}[#1]{@{}c@{}}#2\end{tabular}}

\title{ESPACE: Dimensionality Reduction of Activations for Model Compression}

%

\author{%
  Charbel Sakr \\
  NVIDIA Research\\
  \texttt{csakr@nvidia.com} 
  \And Brucek Khailany \\ NVIDIA Research \\ \texttt{bkhailany@nvidia.com}
}

\begin{document}

\maketitle

\begin{abstract}
We propose ESPACE\footnote{We use the french pronunciation "espace", which means "space".}, an LLM compression technique based on dimensionality reduction of activations. Unlike prior works on weight-centric tensor decomposition, ESPACE projects activations onto a pre-calibrated set of principal components. The activation-centrality of the approach enables retraining LLMs with no loss of expressivity; while at inference, weight decomposition is obtained as a byproduct of matrix multiplication associativity. Theoretical results on the construction of projection matrices with optimal computational accuracy are provided. Experimentally, we find ESPACE enables 50\% compression of GPT3, Llama2, {and Nemotron4} models with small accuracy degradation, as low as a 0.18 perplexity increase on GPT3-22B. At lower compression rates of 20\% to 40\%, ESPACE drives GPT3 models to outperforming their baseline, by up to a 0.38 decrease in perplexity for GPT3-8B. {ESPACE also reduces GEMM execution time and prefill inference latency on existing hardware.} Comparison with related works on compressing Llama2-7B via matrix factorization shows that ESPACE is a first step in advancing the state-of-the-art in tensor decomposition compression of LLMs.
\end{abstract}

\section{Introduction}
Capabilities of large language models (LLMs) have recently soared in natural language understanding and generative power. It is appreciated that there exists a correlation between model size and achievable accuracy. Indeed, as LLMs consume trillions of tokens during their training, a large parameter volume is required to capture intricate linguistic features \cite{xue2024repeat}. This leads to a trade-off in LLMs: larger parameter counts improve accuracy but come with increased serving cost.

However, it is also appreciated that the computational requirements of inference may be lower than those of training \cite{liu2024understanding}. To that end, numerous studies have investigated compression of LLMs to reduce inference cost. The most popular LLM compression techniques are quantization \cite{xiao2023smoothquant} and pruning \cite{frantar2023sparsegpt}. A less explored, but powerful technique is \textit{tensor decomposition}, and in our work, we propose a novel, \textit{activation-centric} way to decompose LLM tensors. Our proposal is to project activations onto a static set of components optimizing fidelity. The projection reduces activation dimensionality and leads to weight compression at inference as a byproduct of matrix multiplication associativity.

\subsection{Related work and motivation for activation-centric tensor decomposition}\label{sec:related_works}
Recent research has proposed many \textbf{quantization} and \textbf{pruning} techniques for compressing LLMs. Examples of advances in LLM quantization include SmoothQuant \cite{xiao2023smoothquant}, AWQ \cite{lin2023awq}, and GPTQ \cite{frantar2022gptq}; while notable LLM pruning works include SparseGPT \cite{frantar2023sparsegpt}, LLM-Pruner \cite{ma2023llm_pruner}, and ReLU-based masking \cite{mirzadeh2023relu}. These methods are conceptually orthogonal to our proposal for activation projection which can be implemented in low precision or sparse formats. Nevertheless, compression fundamentally introduces noise, and an open problem is to study the impact of combining different methods, e.g., quantization and matrix factorization. This is beyond the scope of our paper, but a good direction for future work.

Our work is also orthogonal to \textbf{non-compressive} LLM serving acceleration such as continuous batching \cite{yu2022continuousbatching} or speculative decoding \cite{kim2024speculativedecoding}, and attention optimizations such as PagedAttention \cite{kwon2023pagedattn}, RadixAttention \cite{zheng2023radixattn}, and FlashAttention \cite{dao2022flashattention}. Our study is on matrix multiplication layers involving weights and activations, and hence is mutually exclusive to works improving cross multiplications of activation tensors in attention. In fact, all our experiments use FlashAttention.

Finally, we turn to \textbf{tensor decomposition}, also known as \textbf{factorization}. Thus far, compression for LLM inference using factorization has been focused on \textbf{weight decomposition}. KnGPT \cite{edalati2021kronecker} uses the Kronecker transform to pack a large matrix into two smaller ones. TSVD \cite{chen2023tsvd} performs iterative singular value decomposition (SVD) on weight matrices to produce high rank ternary components. TensorGPT \cite{xu2023tensorgpt} and HEAT \cite{gu2022heat} compress weight matrices into a cascade product of small matrices using the tensor-train algorithm. SVD-LoRa \cite{Badri_Shaji_2024} uses a truncated SVD on weights and finetunes the model using LoRa \cite{hu2021LoRa}. The LoRa adapters are then merged to the main branch using bounds on the rank of sum of low rank matrices. ASVD \cite{yuan2023asvd} performs a truncated SVD on the weights after re-scaling them by a diagonal matrix and their inverse encapsulating activation statistics. This work realizes the importance of activation-awareness but still uses weight-centric compression. SliceGPT \cite{ashkboos2024slicegpt} extracts principal components in normalization layers to guide the deletion of rows and columns in weight matrices. The compression is achieved using a factorization made implicit via computational invariance. The statistical method employed by sliceGPT shares similarities with one of our results, but our problem formulation and solution are different.
\vspace*{-0.5cm}
\begin{wrapfigure}{r}{0.6\textwidth}
\begin{center}
    \includegraphics[trim=21cm 0cm 22.5cm 1cm, clip, width=0.99\linewidth, page=7]{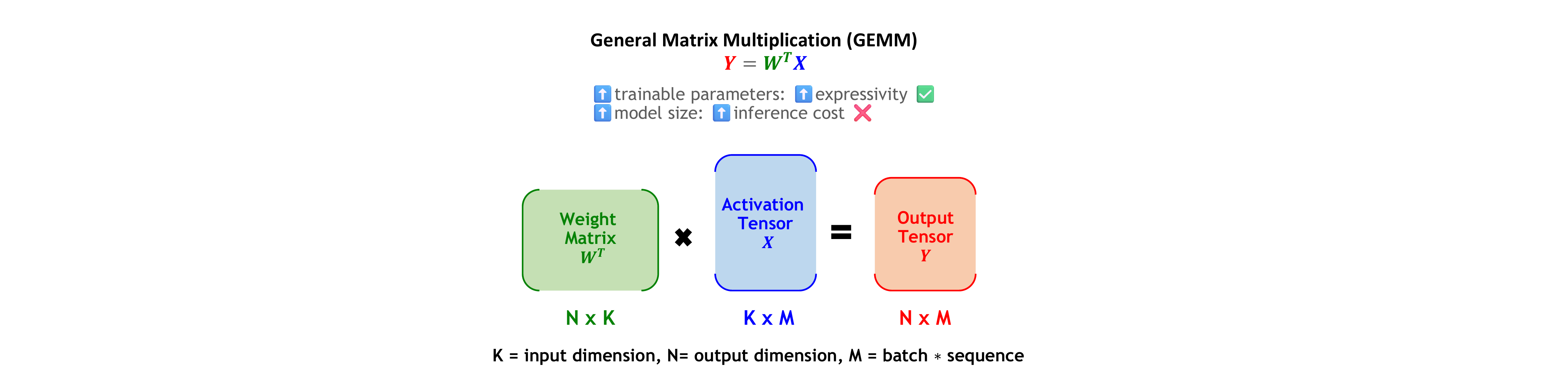}
\end{center}
\caption{\footnotesize Perplexity$^{\ref{fn:ppl}}$ versus model size for GPT3 and Llama2 models and comparison to compressed models using ESPACE.}\vspace*{-0.5cm}
\label{fig:results_summary}
\end{wrapfigure}

~\\
Factorization can also streamline LLM training and finetuning. For instance, LoRa \cite{hu2021LoRa} finetunes pre-trained models using residual low rank adapters which are then absorbed into the main branch. Similarly, GaLore \cite{zhao2024galore} applies a low rank approximation to gradients in back-propagation. These works do not modify inference parameter and operation count, and are hence orthogonal to ours. Our method could be applied in tandem with LoRa or GaLore, but this is beyond the scope of this paper.

Since factorization increases the number of LLM tensors, achieving high compression rates requires the intermediate dimensions to be much smaller than that of original dot product. This breakage in computation usually necessitates a retraining or finetuning stage to be healed. Unfortunately, this healing process is impeded because factorized LLMs have fewer learnable parameters which decreases expressivity \cite{gu2022heat,zhao2024galore}.

To our knowledge, no prior art has explored \textbf{activation decomposition}. Indeed, applying factorization solvers (e.g., SVD) dynamically incurs large inference runtime overheads. Yet, activation decomposition has several desired features which we examine in Section \ref{sec:setup} and motivate via the following insights: (a) weights stay uncompressed during retraining, preventing the aforementioned loss of expressivity; (b) large activation tensors contain inherent redundancies making them prime candidates for compression; and (c) since most LLM computation comprises multiplications of weights and activations, decomposing the latter can lead to compressing the former at inference.

\subsection{Contributions}
\label{sec:contributions}
We propose Eigen Static Principal Activation Component Estimation (ESPACE), an LLM compression technique based on activation dimensionality reduction. Our contributions are as follows:
\begin{itemize}[leftmargin=1\baselineskip,itemsep=0\baselineskip,topsep=0pt]
\item We project activation tensors onto a static and pre-calibrated orthonormal matrix. The projection lowers activation dimensionality but keeps weight matrices intact and fully available for training. At inference, leveraging matrix multiplication associativity, model compression is achieved through pre-computation of the product of weight and projection matrices.
\item We theoretically derive optimal constructions for activation dimensionality reduction. Specifically, the projection matrix is calibrated in a manner to minimize activation decomposition mean squared error and forward propagated noise metrics. The solution is based on an eigenvalue decomposition of activation auto-correlation and yields multiple candidate projections for each activaton tensor. 
\item We empirically study compression of models in the GPT3, Llama2, {and Nemotron4} families evaluated on the Wikitext-103 dataset for perplexity and the LM evaluation harness for downstream task accuracy. The amelioration in size versus perplexity\footnote{\label{fn:ppl}Perplexity depends on tokenizer so that comparisons across LLM families (GPT3 vs Llama2) are not useful.} trade-offs is summarized in Figure \ref{fig:results_summary}. 
\item We show that ESPACE can compress LLMs by $\sim$50\% at the cost of a small accuracy loss, as low as 0.18 increase in perplexity on GPT3-22B. 
\item At lower compression rates, we find encouraging empirical evidence that ESPACE filters out noise and improves accuracy; e.g., $\sim$20\% compressed GPT3-8B lowers its baseline perplexity by 0.38. 
\item As an additional benefit of ESPACE, tangible latency reduction of 35\%-to-45\% is obtained in matrix multiplication layers. This speed-up translates to up to $\sim$ 40\% faster prefill inference latency metricized by the time to first token and measured on existing hardware.
\item By comparison to existing works on tensor decomposition, we determine that ESPACE is a first step in pushing the frontier of compression rate versus accuracy retention (see Figure \ref{fig:llama2-7b}).
\end{itemize}

\section{Dimensionality Reduction \& Projections}
\label{sec:setup}
In this section, we introduce notation for matrix multiplication, review weight decomposition, and introduce our proposed mechanism of dimensionality reduction via activation projections.
\subsection{Matrix Multiplication and Weight Decomposition}

We consider general matrix multiplications (GEMMs) described in Figure \ref{fig:gemms}(a) of the form
\begin{align}
    \label{eqn:gemm}
    \mathbf{Y}=\mathbf{W}^T \mathbf{X}
\end{align}
where $\mathbf{W}$ is a weight matrix of size $K\times N$ and $\mathbf{X}$ is an input activation tensor of size $K\times M$ so that the output activation tensor $\mathbf{Y}$ is of size $N\times M$. Typically, $K$ and $N$ are defined by network topology and layer instance, they are commonly referred to as \emph{embedding} or \emph{hidden} size. In contrast, $M$ stacks multiple dimensions in an activation tensors to obtain a 2D matrix view. Generally, these are the \emph{sequence} and \emph{batch} dimensions.


Transformer-based LLMs have four GEMM layers per block: query-key-value (QKV), projection (Proj), fully connected 1 (FC1), and fully connected 2 (FC2) layers. Our study is concerned with these layers, while cross activation multiplication and embedding layers are untouched. For notational simplicity, in this paper, \emph{we do not include layer indices in our equations}.

The matrix $\mathbf{W}$ in \eqref{eqn:gemm} stores layer parameters and dictates the model's inference accuracy. To improve convergence of these parameters, an optimizer state is stored alongside weights during training and tracks historical values of gradients and updates \cite{kingma2014adam,hinton2012rmsprop}. On the other hand, the activation tensor $\mathbf{X}$ depends on the input stimulus to the network, and is therefore generated on the fly.

Thus, at inference, weights are fixed but activations are dynamic. As a consequence, prior work on tensor decomposition has focused on compressing frozen weight matrices. One way of doing so is breaking $\mathbf{W}^T$ into a low-rank approximation using some form of truncated SVD \cite{yuan2023asvd,Badri_Shaji_2024}, which is described in Figure \ref{fig:gemms}(b). Specifically, \eqref{eqn:gemm} is approximated as:
\begin{align}
   \label{eqn:svd}
\mathbf{Y}\approx\mathbf{U} \mathbf{V} \mathbf{X}
\end{align}
where $\mathbf{U}$ and $\mathbf{V}$ are matrices of size $N\times L$ and $L\times K$, respectively, with $L$ being the factorization rank. For the decomposition procedure to be useful, two conditions need to be met: (a) $L<<\min(K,N)$ for compression, and (b) the approximation $\mathbf{W}^T\approx\mathbf{U} \mathbf{V}$ should be accurate. However, achieving both conditions simultaneously may be challenging because a very low rank factorization usually leads to significant accuracy drop \cite{zhao2024galore}.

As with other compression techniques, e.g., quantization and pruning, retraining of the compressed model may be employed to recover accuracy. However, the decomposition in \eqref{eqn:svd} introduces two training-related hurdles: (a) the effective number of trainable parameters has decreased significantly which reduces model expressivity, and (b) the breakage of spatial weight structure prevents the retraining procedure from loading the original optimizer state. Retraining a model without its optimizer state is known to introduce significant difficulty in convergence \cite{gu2022heat}.

\begin{figure}[!t]
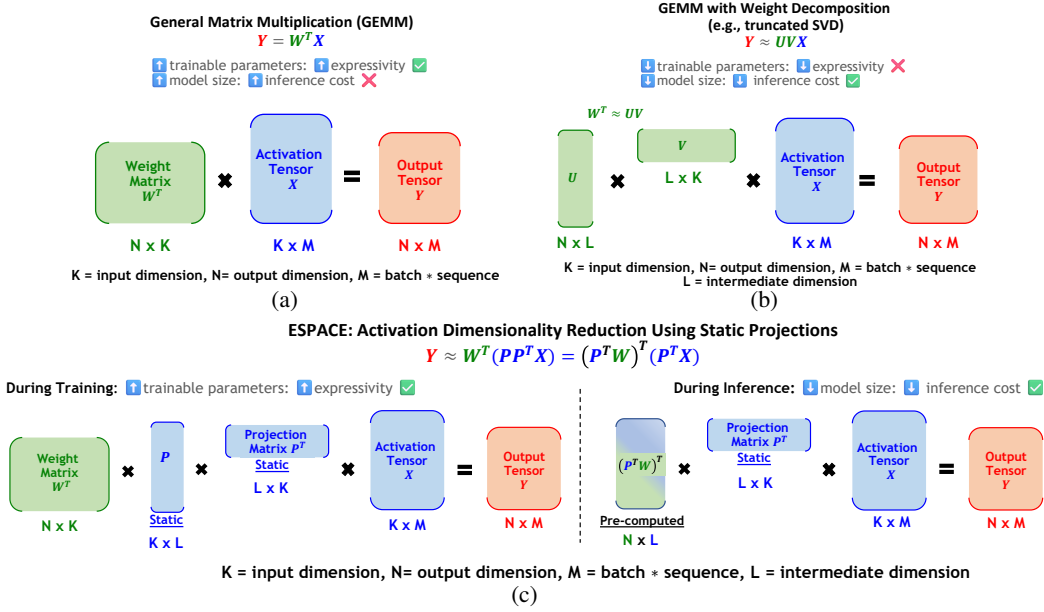

\begin{center}
    \begin{subfigure}[t]{0.45\textwidth}
    \centering
        \includegraphics[trim=24cm 1cm 26.5cm 1cm, clip, width = 0.9\linewidth,page=1]{espace_paper_figures.pdf}
    \caption{}
    \end{subfigure}
    \begin{subfigure}[t]{0.45\textwidth}
    \centering
        \includegraphics[trim=23.5cm 0.5cm 26cm 0.5cm, clip, width = 0.9\linewidth,page=2]{espace_paper_figures.pdf}
    \caption{}
    \end{subfigure}\\~\\
    \begin{subfigure}[t]{0.99\textwidth}
    \centering
        \includegraphics[trim=3cm 1cm 4cm 0.5cm, clip, width = \linewidth,page=3]{espace_paper_figures.pdf}
    \caption{}
    \end{subfigure}
\end{center}
\caption{\footnotesize Decompositions in GEMMs: (a) baseline multiplication of weight matrix and activation tensor, (b) truncated SVD on the weight matrix, and (c) proposed approach of inserting a static matrix to project activations. With ESPACE, all weights are available for training, while inference compression is achieved via per-computation of $\left( \mathbf{P}^T \mathbf{W}\right)$.}
\label{fig:gemms}
\end{figure}
\subsection{Activation Decomposition via Static Projection}
Since weight decomposition poses the above hurdles, we motivate the need for an activation-centric solution. In some measure, activation compression may be  more achievable due to the large stack dimension $M$ comprising batches and sequences. Statistically, the Central Limit Theorem claims that stacking data is likely to exhibit redundancies \cite{johnson2004clt}. In the case of LLMs, such redundancies are further pronounced due to the likelihood of repeated tokens and information in natural language.

Therefore, activations should be prime candidates for tensor decomposition. Nevertheless, prior arts have not explored activation decomposition due to one fundamental limitation: unlike weights, activations are generated on the fly; meaning that tensors must be compressed during inference, potentially incurring large runtime penalties.

We propose to apply \emph{static} dimensionality reduction on the activation tensor $\mathbf{X}$ in  \eqref{eqn:gemm}. Concretely, our proposal is to project $\mathbf{X}$ onto a pre-computed static orthonormal matrix $\mathbf{P}$ of size $K\times L$, where crucially $L<<K$. Reconstructing $\mathbf{X}$ requires a re-expansion using the transpose of the projection matrix, i.e., $\mathbf{X}\approx\mathbf{P} \mathbf{P}^T \mathbf{X}$. While $\mathbf{P}^T \mathbf{P} = \mathbf{I}_{L\times L}$, we note that $\mathbf{P} \mathbf{P}^T \neq \mathbf{I}_{K\times K}$ since $L<<K$. Thus, the proposed activation transformation is noisy, and in Section \ref{sec:theory}, we derive optimal conditions on the calibration of $\mathbf{P}$ to minimize the effects of this noise. 

Our proposal, described in Figure \ref{fig:gemms}(c) is to approximate the GEMM in  \eqref{eqn:gemm} using the following:
\begin{align}
    \label{eqn:espace_gemms}
    \mathbf{Y}=\mathbf{W}^T \mathbf{X} \approx \mathbf{W}^T \mathbf{P} \mathbf{P}^T \mathbf{X}= \mathbf{W}^T \left(\mathbf{P} \mathbf{P}^T \mathbf{X}\right) = \left( \mathbf{P}^T \mathbf{W}\right)^T\left(\mathbf{P}^T \mathbf{X}\right)
\end{align}
where we used associativity of matrix multiplication to highlight key aspects of our approach.

\textbf{During training/finetuning:} we view our GEMM as $\mathbf{W}^T \left(\mathbf{P} \mathbf{P}^T \mathbf{X}\right)$ where $\mathbf{X}$ has been replaced by its approximation. We emphasize that $\mathbf{P}$ is static and does not get updated during training. Meanwhile, $\mathbf{W}^T$ is fully available for adaptation to the activation approximation. The availability of all learnable weights elides losing model expressivity. The structure of $\mathbf{W}^T$ is also unchanged and can be mapped to the baseline's optimizer state. Thus, the proposed approach does not suffer from the same limitations as weight decomposition techniques. We do note that introducing $\mathbf{P}$ induces a small storage overhead at train time. However, when $L<<K$, and the order of computation is properly compiled, the number of operations per iteration is lower than baseline training; and, though not central to our contribution, we did observe up to 15\% reduction in training iteration time for 50\% compressed models.

\textbf{During inference:} we view our GEMM as $\left( \mathbf{P}^T \mathbf{W}\right)^T\left(\mathbf{P}^T \mathbf{X}\right)$ where the required matrices are $\mathbf{P}$ of size $K \times L$ and $\left( \mathbf{P}^T \mathbf{W}\right)$ of size $N\times L$, which is pre-computed before deployment. Thus, per-layer parameter count required for inference has decreased from $KN$ to $L (K+N)$, which, provided $L<<\{K,N\}$ presents an opportunity for significant model compression. For instance, if $N=K$, i.e., $\mathbf{W}^T$ is square, and $L=\nicefrac{K}{4}$, then our method yields 50\% compression at inference time. This is one of the compression rates we target in Section \ref{sec:experiments}.

We emphasize that $\mathbf{P}$ is not shared across GEMM layers; rather, each GEMM layer decomposed according to \eqref{eqn:espace_gemms} has its own pre-calibrated matrix $\mathbf{P}$. Furthermore, by virtue of \eqref{eqn:espace_gemms} not introducing dependencies across mini-batches, our method is fully compatible with data parallelism.


\section{Eigen Static Principal Activation Component Estimation}
\label{sec:theory}
Our proposed activation decomposition induces an approximation error as  $\mathbf{X}\neq\mathbf{P} \mathbf{P}^T \mathbf{X}$. In this section, we first introduce an ergodic estimation of activation auto-correlation. This important statistic is then used for theoretical constructions of $\mathbf{P}$ with guarantees on computational accuracy. Multiple results are presented and then combined in our compression studies in Section \ref{sec:experiments}.

\subsection{Activation auto-correlation estimation}
Let $\mathbf{x}$ be an arbitrary $K$-dimensional vector in $\mathbf{X}$; we define the \emph{activation auto-correlation} matrix of size $K\times K$ as $\mathbf{C}_{\mathbf{X}} = \mathbbm{E}\left[\mathbf{x}\mathbf{x}^T\right]$ where expectation is taken over activation vectors. This matrix is symmetric positive semi-definite having a \emph{real} eigenvalue decomposition (EVD) $\mathbf{C}_{\mathbf{X}} = \mathbf{V}\mathbf{D}\mathbf{V}^T$ where $\mathbf{V}$ is an orthonormal matrix whose columns are eigenvectors, and $\mathbf{D}$ is a diagonal matrix containing the corresponding \emph{non-negative} eigenvalues, assumed to be sorted in decreasing order. The eigenvector corresponding to the $i^\text{th}$ largest eigenvalue is called $i^\text{th}$ \emph{principal} eigenvector.

This autocorrelation matrix can be empirically estimated using an instance of the activation tensor:
\begin{align}
    \label{eqn:autocorrelation_computation}
\mathbf{X} = \begin{bmatrix} \mathbf{x}_1 | \ldots | \mathbf{x_M} \end{bmatrix} \Rightarrow \mathbf{X}\mathbf{X}^T = \left[ \mathbf{x}_1 \mathbf{x}_1^T + \ldots + \mathbf{x}_M \mathbf{x}_M^T \right] \Rightarrow \mathbf{C}_{\mathbf{X}} = \nicefrac{\mathbf{X}\mathbf{X}^T}{M} 
\end {align}
However, evaluating  \eqref{eqn:autocorrelation_computation} and its EVD dynamically introduces a prohibitive computational overhead. Thus, we estimate $\mathbf{C}_{\mathbf{X}}$ in a pre-deployment calibration process. Specifically, during calibration, we sample and forward pass $B$ random input batches, and for each, calculate $\mathbf{C}^{(i)}_{\mathbf{X}}=\nicefrac{\mathbf{X}^{(i)}{\mathbf{X}^{(i)}}^T}{M} $, where superscript $i$ denotes batch index. Then, we average our estimate of the auto-correlation matrix as $\mathbf{C}_{\mathbf{X}} = \nicefrac{\sum_{i=1}^B \mathbf{C}^{(i)}_{\mathbf{X}}}{B}$ and use its eigenvalue decomposition for further optimizations.

This ergodic approach of estimating activation statistics as part of a calibration process has been employed to great effect in other compression works on quantization \cite{sakr2022octav,wu2020integer} and pruning \cite{dai2023efficient}.

\subsection{Activation decomposition with minimum mean squared error}\label{sec:mse}
Let us write $\tilde{\mathbf{X}} = \mathbf{P} \mathbf{P}^T \mathbf{X}$; for a vector $\mathbf{x}\in \mathbf{X}$, its counterpart in $\tilde{\mathbf{x}}\in\tilde{\mathbf{X}}$ is given by: 
\begin{align}
    \label{eqn:vector_transform}
    \tilde{\mathbf{x}}=\sum_{i=1}^L\langle\mathbf{p}_i,\mathbf{x}\rangle\mathbf{p}_i
\end{align}
where $\{\mathbf{p}_i\}_{i=1}^L$ are the orthonormal column vectors of $\mathbf{P}$, i.e., $\langle\mathbf{p}_i,\mathbf{p}_j\rangle=\mathbbm{1}_{\{i==j\}},  \forall i,j \in 1\ldots L $.

We define the mean squared error (MSE) of the decomposition as 
 \begin{align}
     \label{eqn:mse}
     \mathbbm{E}\left[\lVert\mathbf{x}-\tilde{\mathbf{x}}\rVert^2\right]
 \end{align} 
with the $L_2$-norm used throughout this paper. Our first result constructs $\mathbf{P}$ minimizing this MSE.
\begin{theorem}
    \label{thm:mse}
    For an activation tensor $\mathbf{X}$ whose auto-correlation matrix has an eigenvalue decomposition given by $\mathbf{C}_{\mathbf{X}} = \mathbf{V}\mathbf{D}\mathbf{V}^T$, the projection matrix $\mathbf{P}$ minimizing the mean squared error in  \eqref{eqn:mse} is given by $\mathbf{P} = \left[\mathbf{v}_1 | \ldots | \mathbf{v}_L  \right]$ where $\mathbf{v}_i$ is the $i^\text{th}$ principal eigenvector in $\mathbf{V}$.
\end{theorem}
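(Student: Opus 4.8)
The plan is to recognize this as the classical Karhunen--Loève / principal-component characterization and to reduce the minimization to a trace-maximization over orthonormal $K\times L$ matrices. First I would exploit the fact that, because $\mathbf{P}^T\mathbf{P}=\mathbf{I}_{L\times L}$, the matrix $\mathbf{P}\mathbf{P}^T$ is an orthogonal projector, hence symmetric and idempotent. This collapses the reconstruction error into a single quadratic form: for any vector $\mathbf{x}$,
\begin{align}
\lVert \mathbf{x} - \mathbf{P}\mathbf{P}^T\mathbf{x}\rVert^2 = \mathbf{x}^T\left(\mathbf{I}-\mathbf{P}\mathbf{P}^T\right)\mathbf{x},
\end{align}
using $\left(\mathbf{I}-\mathbf{P}\mathbf{P}^T\right)^2 = \mathbf{I}-\mathbf{P}\mathbf{P}^T$. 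Taking the expectation and applying the cyclic property of the trace together with $\mathbf{C}_{\mathbf{X}}=\mathbbm{E}\left[\mathbf{x}\mathbf{x}^T\right]$ then yields
\begin{align}
\mathbbm{E}\left[\lVert \mathbf{x}-\tilde{\mathbf{x}}\rVert^2\right] = \mathrm{tr}(\mathbf{C}_{\mathbf{X}}) - \mathrm{tr}\left(\mathbf{P}^T\mathbf{C}_{\mathbf{X}}\mathbf{P}\right).
\end{align}
Since $\mathrm{tr}(\mathbf{C}_{\mathbf{X}})$ is independent of $\mathbf{P}$, minimizing the MSE is equivalent to maximizing $\mathrm{tr}\left(\mathbf{P}^T\mathbf{C}_{\mathbf{X}}\mathbf{P}\right)$ subject to $\mathbf{P}^T\mathbf{P}=\mathbf{I}_{L\times L}$.

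Next I would diagonalize. Substituting the EVD $\mathbf{C}_{\mathbf{X}} = \mathbf{V}\mathbf{D}\mathbf{V}^T$ and setting $\mathbf{Q}=\mathbf{V}^T\mathbf{P}$---which still has orthonormal columns because $\mathbf{V}$ is orthogonal---transforms the objective into
\begin{align}
\mathrm{tr}\left(\mathbf{P}^T\mathbf{C}_{\mathbf{X}}\mathbf{P}\right) = \mathrm{tr}\left(\mathbf{Q}^T\mathbf{D}\mathbf{Q}\right) = \sum_{i=1}^{K} \lambda_i\, r_i, \qquad r_i := \sum_{j=1}^{L} Q_{ij}^2,
\end{align}
where $\lambda_1\geq\ldots\geq\lambda_K\geq 0$ are the diagonal entries of $\mathbf{D}$ and $r_i$ is the squared norm of the $i^\text{th}$ row of $\mathbf{Q}$. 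The objective is now linear in the weights $r_i$, so the remaining task is purely to determine which weight vectors $\left(r_1,\ldots,r_K\right)$ are attainable.

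The crux---and the step I expect to require the most care---is characterizing the feasible set for the $r_i$. Two constraints follow from $\mathbf{Q}^T\mathbf{Q}=\mathbf{I}_{L\times L}$: summing all squared entries gives $\sum_{i=1}^K r_i = \lVert\mathbf{Q}\rVert_F^2 = L$, and each $r_i\in[0,1]$ because $\mathbf{Q}\mathbf{Q}^T$ is an orthogonal projector, so that both it and $\mathbf{I}-\mathbf{Q}\mathbf{Q}^T$ are positive semi-definite, forcing $0\le \left(\mathbf{Q}\mathbf{Q}^T\right)_{ii}=r_i\le 1$. Over this polytope, maximizing $\sum_i \lambda_i r_i$ with the eigenvalues sorted in decreasing order is a transparent linear program: it is optimal to saturate the weights on the largest eigenvalues, i.e. $r_1=\ldots=r_L=1$ and $r_{L+1}=\ldots=r_K=0$, giving the optimal value $\sum_{i=1}^L\lambda_i$. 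This profile is realized precisely when the columns of $\mathbf{Q}$ span $\mathrm{span}\{\mathbf{e}_1,\ldots,\mathbf{e}_L\}$, equivalently $\mathbf{P}=\mathbf{V}\mathbf{Q}=\left[\mathbf{v}_1 | \ldots | \mathbf{v}_L\right]$ up to an orthogonal rotation within the top eigenspace, which establishes the claimed optimal projection. I would close by noting that any ties in the eigenvalue ordering merely render the optimizer non-unique without changing the optimal MSE.
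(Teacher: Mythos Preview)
Your proof is correct and follows essentially the same route as the paper's: both reduce the MSE to $\mathbbm{E}\left[\lVert\mathbf{x}\rVert^2\right]-\sum_{i=1}^{L}\mathbf{p}_i^T\mathbf{C}_{\mathbf{X}}\mathbf{p}_i$ (equivalently, $\mathrm{tr}(\mathbf{C}_{\mathbf{X}})-\mathrm{tr}(\mathbf{P}^T\mathbf{C}_{\mathbf{X}}\mathbf{P})$) and then maximize the second term over orthonormal columns. The only difference is that the paper invokes the Rayleigh-quotient characterization at this final step, whereas you carry out the diagonalization and linear-program argument explicitly, making your version more self-contained.
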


\begin{proof}
See Appendix \ref{app:thm:mse}. The result is readily obtained by substituting $\tilde{\mathbf{x}}$ in \eqref{eqn:vector_transform} into \eqref{eqn:mse} and minimizing the MSE which involves quadratic forms involving the positive semi-definite $\mathbf{C}_{\mathbf{X}}$.
\end{proof}
Theorem \ref{thm:mse} shares similarities with the Principal Component Analysis (PCA) algorithm \cite{abdi2010pca}. PCA extracts low dimensional features having maximum correlation with input data. Unlike PCA, we omit input normalization to elide its computational cost. Still, we term the columns of $\mathbf{P}$ in Theorem \ref{thm:mse} as Principal Activation Components. Since those are obtained using an EVD on a static estimation of $\mathbf{C}_{\mathbf{X}}$, we call our method Eigen Static Principal Activation Component Estimation (ESPACE).

The MSE in Theorem \ref{thm:mse} is a strong indicator of the quality of an approximation technique, e.g., it is often employed in quantization studies \cite{sakr2022octav,wu2020integer}. However, empirical data may contain large outliers which can dominate the optimization process; say a few high-magnitude vectors in  \eqref{eqn:mse} masking the contribution of small data on the solution. An alternate metric to the MSE can be employed to prevent such artifacts in averaging: the normalized MSE (NMSE) defined as:
\begin{align}
    \label{eqn:nmse}
    \mathbbm{E}\left[\nicefrac{\lVert\mathbf{x}-\tilde{\mathbf{x}}\rVert^2}{\lVert\mathbf{x}\rVert^2}\right]
\end{align}
The solution of Theorem \ref{thm:mse} can be slightly modified to minimize the NMSE in \eqref{eqn:nmse}. 
\begin{corollary}
    \label{cor:nmse}
    For an activation tensor $\mathbf{X}$, let $\hat{\mathbf{C}}_{\mathbf{X}}=\mathbbm{E}\left[\left(\nicefrac{\mathbf{x}}{\lVert\mathbf{x}\rVert}\right)\left(\nicefrac{\mathbf{x}}{\lVert\mathbf{x}\rVert}\right)^T\right]$ be its input-normalized auto-correlation matrix having an eigenvalue decomposition given by $\hat{\mathbf{C}}_{\mathbf{X}} = \mathbf{V}\mathbf{D}\mathbf{V}^T$, the projection matrix $\mathbf{P}$ minimizing the normalized mean squared error in \eqref{eqn:nmse} is given by $\mathbf{P} = \left[\mathbf{v}_1 | \ldots | \mathbf{v}_L  \right]$ where $\mathbf{v}_i$ is the $i^\text{th}$ principal eigenvector in $\mathbf{V}$.
\end{corollary}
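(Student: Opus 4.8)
The plan is to reduce the NMSE minimization to the MSE minimization already solved in Theorem \ref{thm:mse} by absorbing the normalization factor $\nicefrac{1}{\lVert\mathbf{x}\rVert}$ into the activation vector itself. First I would observe that the reconstruction in \eqref{eqn:vector_transform} is linear in $\mathbf{x}$: writing $\tilde{\mathbf{x}} = \mathbf{P}\mathbf{P}^T\mathbf{x}$, the residual becomes $\mathbf{x} - \tilde{\mathbf{x}} = \left(\mathbf{I} - \mathbf{P}\mathbf{P}^T\right)\mathbf{x}$. Because the operator $\mathbf{I} - \mathbf{P}\mathbf{P}^T$ is linear, the positive scalar $\nicefrac{1}{\lVert\mathbf{x}\rVert}$ can be pulled inside the norm, yielding
\begin{align}
\frac{\lVert\mathbf{x}-\tilde{\mathbf{x}}\rVert^2}{\lVert\mathbf{x}\rVert^2} = \left\lVert\left(\mathbf{I} - \mathbf{P}\mathbf{P}^T\right)\frac{\mathbf{x}}{\lVert\mathbf{x}\rVert}\right\rVert^2.
\end{align}

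Next I would introduce the normalized vector $\hat{\mathbf{x}} = \nicefrac{\mathbf{x}}{\lVert\mathbf{x}\rVert}$ and recognize the right-hand side as exactly the per-vector squared reconstruction error of $\hat{\mathbf{x}}$ through the same projector $\mathbf{P}\mathbf{P}^T$. Taking expectations, the NMSE in \eqref{eqn:nmse} is therefore \emph{identical} to the MSE objective of \eqref{eqn:mse} with every occurrence of $\mathbf{x}$ replaced by its normalized counterpart $\hat{\mathbf{x}}$. This places the problem squarely within the hypotheses of Theorem \ref{thm:mse}, applied to the family of unit-norm vectors $\hat{\mathbf{x}}$ rather than to $\mathbf{x}$.

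Finally, I would invoke Theorem \ref{thm:mse} verbatim on the normalized vectors. The relevant auto-correlation matrix is now $\mathbbm{E}\left[\hat{\mathbf{x}}\hat{\mathbf{x}}^T\right] = \mathbbm{E}\left[\left(\nicefrac{\mathbf{x}}{\lVert\mathbf{x}\rVert}\right)\left(\nicefrac{\mathbf{x}}{\lVert\mathbf{x}\rVert}\right)^T\right] = \hat{\mathbf{C}}_{\mathbf{X}}$, whose eigenvalue decomposition is $\hat{\mathbf{C}}_{\mathbf{X}} = \mathbf{V}\mathbf{D}\mathbf{V}^T$ by assumption. Theorem \ref{thm:mse} then immediately gives that the optimal $\mathbf{P}$ is formed from the $L$ principal eigenvectors $\mathbf{v}_1, \ldots, \mathbf{v}_L$ of $\hat{\mathbf{C}}_{\mathbf{X}}$, which is precisely the claim.

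I do not anticipate a genuine obstacle: the only substantive step is the homogeneity observation in the first display, after which the corollary follows directly from Theorem \ref{thm:mse}. The single point warranting a word of care is the implicit assumption that $\mathbf{x} \neq \mathbf{0}$ almost surely, so that the normalization and the expectation defining $\hat{\mathbf{C}}_{\mathbf{X}}$ are well posed; this is harmless for activation data but worth stating explicitly.
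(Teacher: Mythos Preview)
Your proposal is correct and mirrors the paper's own argument: both reduce the NMSE to the MSE of Theorem~\ref{thm:mse} applied to the $L_2$-normalized vectors $\hat{\mathbf{x}}=\nicefrac{\mathbf{x}}{\lVert\mathbf{x}\rVert}$ by pulling the scalar $\nicefrac{1}{\lVert\mathbf{x}\rVert}$ through the linear reconstruction map. Your explicit remark about requiring $\mathbf{x}\neq\mathbf{0}$ almost surely is a welcome bit of extra care that the paper leaves implicit.
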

\begin{proof} The proof in Appendix \ref{app:cor:nmse} uses equivalence of NMSE and MSE with $L_2$-normalized vectors.
\end{proof}
Corollary \ref{cor:nmse} applies to the decomposition in \eqref{eqn:espace_gemms} with no activation normalization required at compute time. Rather, normalization is done during calibration, where $\hat{\mathbf{C}}_{\mathbf{X}}$ is estimated instead of $\mathbf{C}_{\mathbf{X}}$.

Both solutions in Theorem \ref{thm:mse} and Corollary \ref{cor:nmse} are options to be employed in ESPACE, where either may be more suitable on a layer-wise basis. Next we present further options for ESPACE based on the optimization of alternate metrics to the MSE and NMSE.

\subsection{Activation decomposition with optimized forward propagated accuracy metrics}
While local fidelity metrics, such as the MSE and NMSE above, are good indicators of the quality of an approximation technique, it has been shown that better insights on a neural network's accuracy may be derived via the study of forward propagated noise \cite{sakr2017analytical,sakr2018analytical,sakr2019per}. In this section, we study the effects of the decomposition in \eqref{eqn:espace_gemms} on the output of the GEMM, and the output loss of the model.

At a given layer, let us write an arbitrary scalar in the GEMM output tensor in \eqref{eqn:gemm} as $y\in \mathbf{Y}$. Note that $y=\langle \mathbf{w}, \mathbf{x} \rangle$ for some weight vector in $\mathbf{w}\in\mathbf{W}$ and activation vector $\mathbf{x}\in\mathbf{X}$. We also let $\tilde{y}$ be the associated output when the GEMM is approximated by \eqref{eqn:espace_gemms}, which is given by $\tilde{y}=\langle \mathbf{w}, \tilde{\mathbf{x}} \rangle$ with $\tilde{\mathbf{x}}$ given by \eqref{eqn:vector_transform}. We define the GEMM Output-referred MSE (GO-MSE) as 
$\mathbbm{E}\left[(y-\tilde{y})^2 \right]$.

Similarly, given an input to the network, we write the output loss function (the vocab cross-entropy) as $\mathcal{L}$. When one arbitrary activation tensor is transformed as per \eqref{eqn:espace_gemms}, a mismatch in computation is introduced and propagated all the way to the output. We let $\tilde{\mathcal{L}}$ be the resulting new value of the loss function. We define the Network Loss-referred MSE (NL-MSE) as $\mathbbm{E}\left[(\mathcal{L}-\tilde{\mathcal{L}})^2 \right]$.

A closed form solution for $\mathbf{P}$ in \eqref{eqn:espace_gemms} minimizing the GO-MSE and NL-MSE is elusive to us. Therefore, we derive upper bounds on these metrics which we use as a proxy for optimization.
\begin{proposition}
\label{prop:bounds}
    For a GEMM in \eqref{eqn:gemm} and its decomposition in \eqref{eqn:espace_gemms}, the GO-MSE is upper bounded by:
    \begin{align}
        \label{eqn:gomse_bound}
        \mathbbm{E}\left[(y-\tilde{y})^2 \right] \leq 2\mathbbm{E}\left[\lVert \mathbf{w} \rVert^2 \cdot \lVert \mathbf{x} \rVert^2 \right] -2\mathbbm{E}\left[ \langle\mathbf{w},\mathbf{x}\rangle \cdot\langle\mathbf{w},\tilde{\mathbf{x}}\rangle \right]
    \end{align}
    and the NL-MSE is upper bounded by
    \begin{align}
        \label{eqn:nlmse_bound}
        \mathbbm{E}\left[(\mathcal{L}-\tilde{\mathcal{L}})^2 \right] \leq 2\mathbbm{E}\left[\lVert \mathbf{\nabla}_{\mathbf{x}} \rVert^2 \cdot \lVert \mathbf{x} \rVert^2 \right] -2\mathbbm{E}\left[ \langle\mathbf{\nabla}_{\mathbf{x}},\mathbf{x}\rangle \cdot\langle\mathbf{\nabla}_{\mathbf{x}},\tilde{\mathbf{x}}\rangle \right]
    \end{align}
    where a first order Taylor approximation on the loss function is assumed and its gradient with respect to vector $\mathbf{x}$ is denoted as $\mathbf{\nabla}_{\mathbf{x}}$.
\end{proposition}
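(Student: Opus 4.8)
The plan is to prove both bounds with a single algebraic template, since the NL-MSE reduces, after a first-order Taylor expansion, to exactly the same form as the GO-MSE with the weight vector $\mathbf{w}$ replaced by the loss gradient $\mathbf{\nabla}_{\mathbf{x}}$. So I would establish the GO-MSE bound carefully and then transfer it verbatim.

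For the GO-MSE, I would start from $y=\langle\mathbf{w},\mathbf{x}\rangle$ and $\tilde{y}=\langle\mathbf{w},\tilde{\mathbf{x}}\rangle$ and expand the square pointwise as $(y-\tilde{y})^2=\langle\mathbf{w},\mathbf{x}\rangle^2-2\langle\mathbf{w},\mathbf{x}\rangle\langle\mathbf{w},\tilde{\mathbf{x}}\rangle+\langle\mathbf{w},\tilde{\mathbf{x}}\rangle^2$. Then I would apply Cauchy--Schwarz separately to the two quadratic terms, $\langle\mathbf{w},\mathbf{x}\rangle^2\leq\lVert\mathbf{w}\rVert^2\lVert\mathbf{x}\rVert^2$ and $\langle\mathbf{w},\tilde{\mathbf{x}}\rangle^2\leq\lVert\mathbf{w}\rVert^2\lVert\tilde{\mathbf{x}}\rVert^2$, while leaving the cross term untouched. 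The crucial step is then to replace $\lVert\tilde{\mathbf{x}}\rVert^2$ by $\lVert\mathbf{x}\rVert^2$: because $\mathbf{P}^T\mathbf{P}=\mathbf{I}_{L\times L}$, the map $\mathbf{P}\mathbf{P}^T$ is a symmetric idempotent, hence an orthogonal projection, giving $\lVert\tilde{\mathbf{x}}\rVert^2=\lVert\mathbf{P}\mathbf{P}^T\mathbf{x}\rVert^2=\lVert\mathbf{P}^T\mathbf{x}\rVert^2\leq\lVert\mathbf{x}\rVert^2$ (equivalently, $\lVert\mathbf{x}\rVert^2=\lVert\tilde{\mathbf{x}}\rVert^2+\lVert\mathbf{x}-\tilde{\mathbf{x}}\rVert^2$). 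Summing the three contributions yields the pointwise inequality $(y-\tilde{y})^2\leq 2\lVert\mathbf{w}\rVert^2\lVert\mathbf{x}\rVert^2-2\langle\mathbf{w},\mathbf{x}\rangle\langle\mathbf{w},\tilde{\mathbf{x}}\rangle$, and taking expectations over activation vectors delivers \eqref{eqn:gomse_bound}.

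For the NL-MSE, I would first invoke the stated first-order Taylor approximation of $\mathcal{L}$ about $\mathbf{x}$: perturbing a single layer's activation from $\mathbf{x}$ to $\tilde{\mathbf{x}}$ changes the loss by $\tilde{\mathcal{L}}-\mathcal{L}\approx\langle\mathbf{\nabla}_{\mathbf{x}},\tilde{\mathbf{x}}-\mathbf{x}\rangle$, so that $\mathcal{L}-\tilde{\mathcal{L}}\approx\langle\mathbf{\nabla}_{\mathbf{x}},\mathbf{x}\rangle-\langle\mathbf{\nabla}_{\mathbf{x}},\tilde{\mathbf{x}}\rangle$. This is precisely the GO-MSE structure with $\mathbf{w}$ replaced by $\mathbf{\nabla}_{\mathbf{x}}$, so the identical Cauchy--Schwarz-plus-projection argument applies and gives \eqref{eqn:nlmse_bound} after taking expectations.

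The only nontrivial inequality is the non-expansiveness of the orthogonal projection, $\lVert\tilde{\mathbf{x}}\rVert\leq\lVert\mathbf{x}\rVert$, which is what keeps the bound symmetric by letting $\tilde{y}^2$ be controlled by the same $\lVert\mathbf{w}\rVert^2\lVert\mathbf{x}\rVert^2$ term; everything else is a routine expansion. The main conceptual care is in the Taylor step for the NL-MSE: the result is an approximate bound valid under the first-order assumption, and one must keep in mind that $\mathbf{\nabla}_{\mathbf{x}}$ denotes the gradient of the loss with respect to the pre-projection activation vector $\mathbf{x}$ at the transformed layer.
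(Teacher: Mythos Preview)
Your proposal is correct and follows essentially the same route as the paper: expand the square, apply Cauchy--Schwarz to the two quadratic terms, use $\lVert\tilde{\mathbf{x}}\rVert\leq\lVert\mathbf{x}\rVert$ to merge them, take expectations, and for the NL-MSE substitute $\mathbf{\nabla}_{\mathbf{x}}$ for $\mathbf{w}$ after the first-order Taylor step. The only cosmetic difference is that the paper establishes $\lVert\tilde{\mathbf{x}}\rVert^2\leq\lVert\mathbf{x}\rVert^2$ by extending $\{\mathbf{p}_i\}_{i=1}^L$ to a full orthonormal basis and comparing coefficient sums, whereas you invoke the idempotence of $\mathbf{P}\mathbf{P}^T$ directly; the content is identical.
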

\begin{proof}
The proof in Appendix \ref{app:prop:bounds} first shows $\lVert \tilde{\mathbf{x}} \rVert^2<\lVert \mathbf{x} \rVert^2$ and then uses the Cauchy Schwarz inequality to establish both bounds.
\end{proof}

Next, we provide closed form solutions for $\mathbf{P}$ in \eqref{eqn:espace_gemms} minimizing the bounds in Proposition \ref{prop:bounds}.

\begin{theorem}
    \label{thm:bounds}
    For a GEMM in \eqref{eqn:gemm} and its decomposition in \eqref{eqn:espace_gemms}, the projection matrix minimizing the bounds in Proposition \ref{prop:bounds} is given by $\mathbf{P} = \left[\mathbf{v}_1 | \ldots | \mathbf{v}_L  \right]$ where $\mathbf{v}_i$ is the $i^\text{th}$ principal eigenvector in $\mathbf{V}$ obtained via eigenvalue decomposition on a matrix $\mathbf{C} = \mathbf{V}\mathbf{D}\mathbf{V}^T$ defined as:
    \begin{align}
        \label{eqn:corr_both}
        \mathbf{C}=\mathbbm{E}\left[\mathbf{x}\mathbf{x}^T\mathbf{w}\mathbf{w}^T+\mathbf{w}\mathbf{w}^T\mathbf{x}\mathbf{x}^T\right] \quad \text{and} \quad \mathbf{C}=\mathbbm{E}\left[\mathbf{x}\mathbf{x}^T\mathbf{\nabla}_{\mathbf{x}}\mathbf{\nabla}_{\mathbf{x}}^T+\mathbf{\nabla}_{\mathbf{x}}\mathbf{\nabla}_{\mathbf{x}}^T\mathbf{x}\mathbf{x}^T\right]
    \end{align}
    to minimize the upper bounds on GO-MSE in \eqref{eqn:gomse_bound} and NL-MSE in \eqref{eqn:nlmse_bound}, respectively.
\end{theorem}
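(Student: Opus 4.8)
The plan is to observe that in each bound of Proposition~\ref{prop:bounds} the first expectation does not depend on $\mathbf{P}$ (it involves only $\mathbf{x}$ together with $\mathbf{w}$ or $\mathbf{\nabla}_{\mathbf{x}}$), so minimizing the bound is equivalent to \emph{maximizing} the second expectation, $\mathbbm{E}\left[\langle\mathbf{w},\mathbf{x}\rangle\langle\mathbf{w},\tilde{\mathbf{x}}\rangle\right]$ for GO-MSE and its gradient analogue $\mathbbm{E}\left[\langle\mathbf{\nabla}_{\mathbf{x}},\mathbf{x}\rangle\langle\mathbf{\nabla}_{\mathbf{x}},\tilde{\mathbf{x}}\rangle\right]$ for NL-MSE. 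I would carry out the GO-MSE case in full and note that the NL-MSE case follows verbatim upon replacing $\mathbf{w}$ by $\mathbf{\nabla}_{\mathbf{x}}$.

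First I would rewrite the objective as a trace. Using $\tilde{\mathbf{x}}=\mathbf{P}\mathbf{P}^T\mathbf{x}$ from \eqref{eqn:vector_transform}, the summand is a product of two scalars, $(\mathbf{x}^T\mathbf{w})(\mathbf{w}^T\mathbf{P}\mathbf{P}^T\mathbf{x}) = \mathbf{x}^T\mathbf{w}\mathbf{w}^T\mathbf{P}\mathbf{P}^T\mathbf{x}$, i.e.\ a quadratic form equal to its own trace. Applying the cyclic property of the trace, linearity of expectation, and the fact that $\mathbf{P}$ is deterministic, the objective becomes $\mathrm{tr}\left(\mathbf{P}\mathbf{P}^T\mathbf{A}\right)$ with $\mathbf{A}=\mathbbm{E}\left[\mathbf{x}\mathbf{x}^T\mathbf{w}\mathbf{w}^T\right]$.

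The key step is a symmetrization. Since $\mathbf{P}\mathbf{P}^T$ is symmetric, transposing inside the trace and using cyclicity gives $\mathrm{tr}\left(\mathbf{P}\mathbf{P}^T\mathbf{A}\right)=\mathrm{tr}\left(\mathbf{P}\mathbf{P}^T\mathbf{A}^T\right)$, so the objective equals $\frac{1}{2}\mathrm{tr}\left(\mathbf{P}\mathbf{P}^T(\mathbf{A}+\mathbf{A}^T)\right)=\frac{1}{2}\mathrm{tr}\left(\mathbf{P}\mathbf{P}^T\mathbf{C}\right)$, where $\mathbf{C}=\mathbf{A}+\mathbf{A}^T$ is exactly the symmetric matrix in \eqref{eqn:corr_both} because $(\mathbf{x}\mathbf{x}^T\mathbf{w}\mathbf{w}^T)^T=\mathbf{w}\mathbf{w}^T\mathbf{x}\mathbf{x}^T$. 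Finally I would use cyclicity once more, $\mathrm{tr}\left(\mathbf{P}\mathbf{P}^T\mathbf{C}\right)=\mathrm{tr}\left(\mathbf{P}^T\mathbf{C}\mathbf{P}\right)=\sum_{i=1}^L\mathbf{p}_i^T\mathbf{C}\mathbf{p}_i$, and invoke the Ky Fan (Rayleigh--Ritz) trace-maximization principle: over all $\mathbf{P}$ with $\mathbf{P}^T\mathbf{P}=\mathbf{I}_{L\times L}$, this sum is maximized by taking the columns of $\mathbf{P}$ to be eigenvectors of $\mathbf{C}$ associated with its $L$ largest eigenvalues, which is precisely the claimed construction.

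The main obstacle, and the reason this does not simply reduce to Theorem~\ref{thm:mse}, is that $\mathbf{C}$ here is \emph{not} positive semi-definite: for a single sample, $\mathbf{x}\mathbf{x}^T\mathbf{w}\mathbf{w}^T+\mathbf{w}\mathbf{w}^T\mathbf{x}\mathbf{x}^T=(\mathbf{x}^T\mathbf{w})(\mathbf{x}\mathbf{w}^T+\mathbf{w}\mathbf{x}^T)$ is generically indefinite, so the clean PCA interpretation of Theorem~\ref{thm:mse} is unavailable. One must therefore rely on the general trace-maximization result valid for arbitrary symmetric (possibly indefinite) matrices, which still selects the top eigenvectors; I would state the Ky Fan principle in that general form rather than appealing to positive semi-definiteness. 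I would also take care that the symmetrization is licensed precisely because it is $\mathbf{P}\mathbf{P}^T$, and not $\mathbf{A}$, that is the symmetric factor inside the trace.
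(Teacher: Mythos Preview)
Your argument is correct and structurally the same as the paper's: both reduce to maximizing the second expectation, rewrite it as $\sum_{i=1}^L\mathbf{p}_i^T\mathbf{C}\mathbf{p}_i$ for the symmetric $\mathbf{C}$ in \eqref{eqn:corr_both}, and then invoke the Rayleigh--Ritz/Ky~Fan principle over orthonormal $\{\mathbf{p}_i\}$. The bookkeeping to reach the symmetrized $\mathbf{C}$ differs. The paper writes $2\langle\mathbf{w},\mathbf{x}\rangle\langle\mathbf{w},\tilde{\mathbf{x}}\rangle$ as two identical scalar copies and, using commutativity of the dot product, rearranges one copy into $\mathbf{p}_i^T\mathbf{x}\mathbf{x}^T\mathbf{w}\mathbf{w}^T\mathbf{p}_i$ and the other into $\mathbf{p}_i^T\mathbf{w}\mathbf{w}^T\mathbf{x}\mathbf{x}^T\mathbf{p}_i$, summing to the desired quadratic form. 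You instead pass through the trace and obtain the symmetrization in one stroke from $\mathrm{tr}(\mathbf{P}\mathbf{P}^T\mathbf{A})=\mathrm{tr}(\mathbf{P}\mathbf{P}^T\mathbf{A}^T)$, which is cleaner and makes explicit that the symmetry of $\mathbf{P}\mathbf{P}^T$ is what licenses replacing $\mathbf{A}$ by $\tfrac{1}{2}(\mathbf{A}+\mathbf{A}^T)$. Your care about indefiniteness is also slightly sharper: the paper's proof remarks that when $\mathbf{C}$ is not positive semi-definite one sorts eigenvalues by absolute value, whereas your appeal to Ky~Fan for arbitrary symmetric matrices correctly selects the $L$ largest eigenvalues by value, which is what actually maximizes the stated bound.
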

\begin{proof} The proof is included in Appendix \ref{app:thm:bounds}, where we also include modifications required in calibration. Specifically, $\mathbf{C}_\mathbf{X}$ is reused and left/right multiplied by $\nicefrac{\mathbf{W}\mathbf{W}^T}{N}$ to yield $\mathbf{C}$ in \eqref{eqn:corr_both} minimizing the bound on GO-MSE. An additional backward pass is needed to properly scale activation vectors and their gradients when calibrating $\mathbf{C}$ in \eqref{eqn:corr_both} minimizing the bound on NL-MSE.
\end{proof}

Theorem \ref{thm:bounds} augments Theorem \ref{thm:mse} and Corollary \ref{cor:nmse} with two options for the design of $\mathbf{P}$. Much like Corollary \ref{cor:nmse}, we supplement our new solutions with $L_2$-normalization to include  $$\hat{\mathbf{C}}=\mathbbm{E}\left[\nicefrac{\left(\mathbf{x}\mathbf{x}^T\mathbf{w}\mathbf{w}^T+\mathbf{w}\mathbf{w}^T\mathbf{x}\mathbf{x}^T\right)}{\lVert \mathbf{w} \rVert^2 \cdot \lVert \mathbf{x} \rVert^2}\right] \quad \text{and} \quad \hat{\mathbf{C}}=\mathbbm{E}\left[\nicefrac{\left(\mathbf{x}\mathbf{x}^T\mathbf{\nabla}_{\mathbf{x}}\mathbf{\nabla}_{\mathbf{x}}^T+\mathbf{\nabla}_{\mathbf{x}}\mathbf{\nabla}_{\mathbf{x}}^T\mathbf{x}\mathbf{x}^T\right)}{\lVert \mathbf{\nabla}_{\mathbf{x}} \rVert^2 \cdot \lVert \mathbf{x} \rVert^2}\right]$$ as alternate choices for the calibrated matrices $\mathbf{C}$ in \eqref{eqn:corr_both}. Unlike Corollary \ref{cor:nmse}, $L_2$-normalization in these two matrices does not correspond to a notable optimization. Nevertheless, these options are retained in the spirit of suppressing the influence of large data in calibration.

Thus, overall we have six choices for $\mathbf{P}$. Since each can be obtained as part of a fast and pre-deployment calibration phase, we may simply select the best one for each layer. In our experiments of Section \ref{sec:experiments}, the best candidate is determined via a per-layer validation over all six choices. A sensitivity study on the impact of each of the six candidates is provided in Appendix \ref{app:sensitivities}.

\section{Model Compression Studies}
\label{sec:experiments}
In this section, we report on experimental studies investigating LLM compression using ESPACE.
\subsection{Experimental setup}
We employ three sets of open source LLMs: GPT3 \cite{shoeybi2020megatronlm}, Llama2 \cite{touvron2023llama}, {and Nemotron4 \cite{nemotron}}. Specifically, we experiment on GPT3-\{1.3B, 8B, 22B\}, Llama2-\{7B, 13B\}, {and Nemotron4-15B}. Accuracy is evaluated in two ways: perplexity measured on the Wikitext-103 dataset \cite{merity2016pointer} and zero-shot downstream task accuracy of: BoolQ (BQ) \cite{clark2019boolq}, Hellaswag (HS) \cite{zellers2019hellaswag}, PIQA (PQ) \cite{Bisk2020piqa}, RACE (RA) \cite{lai-etal-2017-race}, and WinoGrande (WG) \cite{ai2:winogrande}.

The Wikitext-103 dataset is split into train, validation, and test sets. We use 512 random sequences from the training set for calibrating projection matrices required by ESPACE. We use the validation set for layer-wise sensitivity studies. The test set is used to report perplexity results in this section.

Our implementation uses NVIDIA's Megatron LM \cite{shoeybi2020megatronlm} and downstream task evaluation invokes Eleuther AI's LM evaluation harness \cite{eval-harness}. For the latter, we report raw accuracy scores, and their average; we do not post process results or apply normalization to the scores.

When ESPACE is applied, we retrain the models to adapt to the approximation error of activation projection as discussed in Section \ref{sec:setup}. Retraining simply extends the models' pre-training sessions and uses the 330B-token MTNLG dataset \cite{smith2022mtnlg}, which was used to train GPT3 models. All implementation details are included in Appendix \ref{app:implementation} to help reproducibility of our results.

We metricize model size reduction via inference compression rate. Specifically, for layers decomposed per \eqref{eqn:espace_gemms}, we count the number of entries in $\mathbf{P}$ and $\left( \mathbf{P}^T \mathbf{W}\right)$; for other layers, we count those in $\mathbf{W}^T$. We also report the latency of executing all network GEMMs in \eqref{eqn:gemm} or \eqref{eqn:espace_gemms}, which we measure using a NVIDIA A100 GPU and a simple, un-optimized implementation (see Appendix \ref{app:latency}). {We also report prefill inference latency, metricized via the Time to First Token (TTFT), and measured using the Megatron-LM implementation.} {In our measurements, we use a batch size of 1 and sequence length of 2048 and 4096 for GPT3 and Llama2/Nemotron4 models, respectively.} The reported reductions in total GEMM latency and TTFT constitute evidence that compression improves inference throughput. It is beyond the scope of this paper to evaluate the impact of ESPACE on end-to-end token throughput and latency on LLM inference serving systems, since this requires a complex set of optimizations including but not limited to optimization of back-to-back GEMMs into fused kernels, KV caching, continuous batching, as well as thorough performance studies with varying input and output sequence lengths. Thus, we leave an evaluation of token generation throughput and energy savings and improvements to future work.

\subsection{Validation perplexity studies}
Our experiments start with a calibration phase where we prepare the static projection matrix $\mathbf{P}$ for each layer. The dimension $L$ in $\mathbf{P}$ is chosen as the lowest power of two such that layer compression is at least 50\%. The power-of-two restriction ensures best tensor core utilization, and the resulting compression rate depends on the dimensions of the original layer ($K$ and $N$). Exact details of these values for all layers and models are included in Appendix \ref{app:configurations}.

We perform a sensitivity study on the Wikitext-103 validation perplexity when ESPACE is applied out-of-the-box (no retraining) one layer at a time. For each layer, we identify which of our six candidates projection matrices in Section \ref{sec:theory} yields lowest validation perplexity. Layers are then sorted according to their impact on perplexity from least to most destructive. We then evaluate the validation perplexity when ESPACE is progressively applied to out-of-the-box to all layers according to this ranking. Fine-grained details of this exploration are included in Appendix \ref{app:additonal_exp} for all models.

\begin{wrapfigure}{r}{0.35\textwidth}
\begin{center}
    \includegraphics[trim=26.5cm 0cm 31.5cm 1cm, clip, width=0.9\linewidth, page=8]{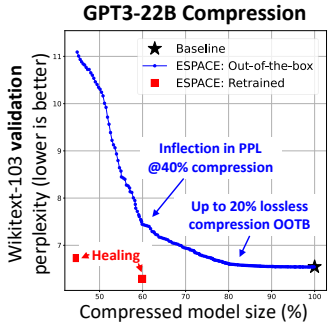}
\end{center}
\caption{\footnotesize Validation perplexity for GPT3-22B when ESPACE is progressively applied to its GEMM layers. The order of layer selection is based on a layer-wise sensitivity analysis.}\vspace*{-0.5cm}
\label{fig:gpt3_22b}
\end{wrapfigure}

This exploration yields an interesting finding: as we progressively apply ESPACE to more layers, the perplexity marginally increases until an inflection point after which accuracy degradation accelerates. This inflection occurs at 20\% to 40\% compression depending on the model. Figure \ref{fig:gpt3_22b} depicts this phenomenon for GPT3-22B, and the same data for other models can be found in Appendix \ref{app:prog_sweeps}. 

We find that out-of-the-box application of ESPACE works better for larger models; GPT3-22B, the largest model we experimented on, exhibits an inflection in perplexity at 40\% compression, which is the highest in our results. This is consistent with many earlier works on general compression of neural networks \cite{park2020profit,nagel2022overcoming,kuzmin2024versus,dery2024everybody}. Interestingly, a 20\% out-of-the-box compressed GPT3-22B is iso-accurate to its uncompressed counterpart (see Figure \ref{fig:gpt3_22b}); without retraining, its test perplexity of \textbf{6.61} which is within 1\% of the 6.55 baseline.

After the above validation study is performed, we select two configurations for layers to be compressed using ESPACE: (a) layers corresponding to the inflection point, i.e., 20\% to 40\% compression, and (b) as many layers needed to achieve a compression of $\sim$50\%. For both configurations, we retrain the compressed models and further evaluate their achievable accuracy.

\subsection{Compression of  GPT3 models}
\begin{table}[!t]\scriptsize
\setlength{\tabcolsep}{4.75pt}
    \centering
    \caption{\footnotesize GEMM latency, time to first token, Wikitext-103 perplexity (WK-103 PPL), and downstream task accuracy of GPT3, Llama2, and Nemotron4 models compressed with ESPACE$^{\ref{fn:typset}}$.}
    \label{table:results}
    ~\\~\\
    \begin{tabular}{|c|c||c|c||c||c|c|c|c|c|c|}
    \hline
    \multirow{2}{*}{\specialcell{Method \\(Compression)}} & \multirow{2}{*}{\specialcell{\# of\\Weights}} & \multirow{2}{*}{\specialcell{Total GEMM \\ Latency}} & \multirow{2}{*}{\specialcell{TTFT impl. in \\Megatron-LM }} & \multirow{2}{*}{\specialcell{WK-103 \\PPL $\downarrow$}} & \multicolumn{6}{c|}{Downstream Task Accuracy $\uparrow$}\\
    \cline{6-11}
    ~&~&~&~&~& BQ & HS & PQ & RA & WG & Avg.\\
    \hline\hline
    \multicolumn{11}{|c|}{GPT3-1.3B}\\ \hline
    Baseline&\scientificNotation{1.21}{9} &24.2ms&39.8ms&9.94&\textbf{64.3}&43.5&\textbf{74.2}&\textbf{37.6}&58.1&55.5
\\
    \hline
    ESPACE (20\%) & \scientificNotation{9.71}{8} &20.6ms (-15\%)&36.1ms (-9\%)& \textbf{9.53} &60.6&\textbf{45.1}&\textit{73.0}&\textit{36.4}&\textbf{62.9}&\textbf{55.6}\\
    \hline
    ESPACE (47\%) & \scientificNotation{6.42}{8} &15.9ms (-34\%)&31.7ms (-20\%)& 11.07 &\textit{62.3}&39.9&\textit{71.6}&34.5&\textit{58.7}&\textit{53.4}\\
    \hline\hline
    \multicolumn{11}{|c|}{GPT3-8B}\\ \hline
    Baseline&\scientificNotation{8.05}{9}&136ms&186ms&7.38&69.0&54.2&78.1&\textbf{41.4}&67.8&62.1
\\
    \hline
    ESPACE (21\%) & \scientificNotation{6.33}{9} &110ms (-19\%)&155ms (-16\%)& \textbf{7.00}&\textbf{70.3}&\textbf{55.3}&\textbf{78.9}&\textit{40.7}&\textbf{69.3}&\textbf{62.9}\\
    \hline
    ESPACE (50\%) & \scientificNotation{4.08}{9} &76.8ms (-44\%)& 122ms (-35\%)&\textit{7.66}&\textit{66.5}&\textit{52.3}&\textit{77.6}&38.9&\textit{66.9}&\textit{60.4}
\\
\hline \hline
    \multicolumn{11}{|c|}{GPT3-22B}\\ \hline
Baseline&\scientificNotation{2.17}{10}&354ms&457ms&6.55&76.4&57.2&79.3&\textbf{40.7}&\textbf{70.5}&\textbf{64.8}\\
\hline
ESPACE (40\%) & \scientificNotation{1.30}{10} & 229ms (-35\%)&313ms (-32\%)&\textbf{6.29}&\textbf{76.6}&\textbf{57.3}&\textbf{79.5}&\textit{40.2}&\textit{70.2}&\textbf{64.8}\\
\hline
ESPACE (55\%) & \scientificNotation{9.74}{9} &181ms (-49\%)&261ms(-43\%)& \textit{6.73} &\textit{72.2}&\textit{55.8}&\textit{79.3}&\textit{40.1}&\textit{69.7}&\textit{63.4}\\
    \hline\hline
    \multicolumn{11}{|c|}{Llama2-7B}\\ \hline
    Baseline&\scientificNotation{6.48}{9} &210ms&368ms&\textbf{5.06}&\textbf{79.2}&57.1&78.1&\textbf{44.0}&69.5&65.6
\\
    \hline
    Retrained (0\%) &\scientificNotation{6.48}{9} &210ms&368ms&\textbf{5.06}&78.2&\textbf{57.9}&78.0&43.7&\textbf{70.6}&\textbf{65.7}
\\
\hline
    ESPACE (21\%) & \scientificNotation{5.11}{9} & 169ms (-19\%)&322ms (-12\%)&\textit{5.07}&\textit{77.1}&\textit{57.1}&\textbf{78.7}&\textit{42.7}&\textit{69.2}&\textit{65.0}
\\
    \hline
    ESPACE (50\%) & \scientificNotation{3.24}{9} &113ms (-46\%)&266ms (-28\%)& 5.67&72.2&52.0&\textit{76.5}&38&63.5&60.4
\\
    \hline\hline
    \multicolumn{11}{|c|}{Llama2-13B}\\ \hline
    Baseline&\scientificNotation{1.27}{10}&406ms&643ms&4.61&\textbf{82.4}&60.2&\textbf{79.5}&\textbf{46.8}&71.9&\textbf{68.2}
\\
    \hline
    ESPACE (20\%) & \scientificNotation{1.01}{10} &336ms (-17\%)&562ms (-13\%)& \textbf{4.59} &\textit{78.3}&\textbf{60.5}&\textbf{79.5}&43.0&\textbf{72.8}&\textit{66.8}
\\
    \hline
    ESPACE (50\%) & \scientificNotation{6.34}{9} & 259ms (-36\%)&447ms (-31\%)&5.13&75.7&56.2&\textit{78.0}&41.5&\textit{69.1}&64.1
\\
\hline\hline
    \multicolumn{11}{|c|}{Nemotron4-15B}\\ \hline
    Baseline&\scientificNotation{1.25}{10}&414ms&741ms&\textbf{6.06}&78.3&\textbf{62.1}&\textbf{81.1}&\textbf{47.0}& \textbf{75.2}&\textbf{68.7}
\\
    \hline
    ESPACE (25\%) & \scientificNotation{9.54}{9} &324ms (-22\%)&655ms (-12\%)& \textit{6.28} &\textbf{78.9}&\textit{59.9}&\textit{80.0}&\textit{46.4}&\textit{72.8}&\textit{67.6}
\\
    \hline
    ESPACE (50\%) & \scientificNotation{6.25}{9} & 223ms (-46\%)&545ms (-26\%)&6.93&\textit{77.9}&57.0&\textit{77.8}&42.4&69.9&65.0
\\
\hline
    \end{tabular}
\end{table}
Once compression targets and layer configurations are set, we retrain GPT3 models on the MTNLG dataset. Although we use all of the 330B available tokens, we do observe the training loss quickly converging. We leave training hyperparameters unchanged except for one: we disable dropout. Our rationale is that activation projection is one form of deterministic and structured dropout such that additional regularization may not be needed. Results\footnote{\label{fn:typset}\textbf{Boldface} indicates best result per task. \textit{Italics} indicates ESPACE results within 5\% of the baseline} on GPT3 models are included in Table \ref{table:results}.

We find that ESPACE can compress GPT3 models by $\sim$50\% at the cost of a small accuracy degradation. In the case of GPT3-22B, the perplexity increase is of only 0.18; in general, the gap decreases for larger overall model size. By and large, similar trends are observed for downstream task accuracies and we note that most scores of 50\% compressed models fall within 5\% of the baseline.

For lower compression ratios (inflection points at 20\% to 40\%), ESPACE converges to an accuracy \emph{better than that of the baseline}. The best improvement occurs for GPT3-8B, where ESPACE produces a 6B model with 0.38 lower perplexity than its 8B baseline. The improvements are observed both in terms of perplexity and downstream task accuracy. While GPT3 models may be over-parameterized, we posit that ESPACE acts a regularizer at moderate compression rates. Specifically, we believe that projection onto principal activation components filters out unnecessary information coming from small eigenvalue components.

For all models, we observe an encouraging translation of compression to GEMM latency reduction {by up to 49\%} which leads to noticeable speed-up in TTFT { by up to 43\%}..

\subsection{Compression of Llama2 models and comparison to related works}
For Llama2, we only retrain using 200B MTNLG tokens because we observed quick convergence for GPT3. Llama2 models were trained on an undisclosed dataset of 2T tokens \cite{touvron2023llama}. Therefore, with 200B tokens, the healing phase of ESPACE constitutes no more than 10\% of the original pre-training session.
Since Llama2 pre-training details are not openly available, we re-used all hyperparameters from GPT3, which is likely to be sub-optimal. In spite of the two handicaps of dataset disparity and hyperparameter sub-optimality, we obtained promising results as reported in Table \ref{table:results}.

For Llama2-7B, we first retrained the uncompressed baseline. The purpose of this experiment is twofold: (a) ensure that our hyperparameters at least do not corrupt the model, and (b) verify that the healing process is not just an artifact of processing more tokens. Both hypotheses appeared to be valid: the retrained baseline has nearly identical accuracy compared to the original model.

Generally, we find that the trends of ESPACE compression for Llama2 are similar to GPT3, albeit slightly less successful. Though the results are still promising, we attribute the slight shortcomings in accuracy to the handicaps above. We find that 50\% ESPACE compression on Llama2 leads to $\sim$0.6 perplexity increase and similar degradation in terms of downstream task accuracy. Notably, compressing the Llama2-13B model to to a 6.3B model yields comparable accuracy to the Llama2-7B baseline which itself is a 6.5B model.

In addition, for 20\% compression, we find that ESPACE matches the accuracy of the baseline for Llama2 models. While not as impressive as the improvements observed with GPT3, ESPACE is able to produce 5B and 10B models matching the 7B and 13B baselines, which does push the pareto frontier of accuracy versus model size in the right direction as shown in Figure \ref{fig:results_summary}.

\begin{wrapfigure}{r}{0.4\textwidth}
\begin{center}
    \includegraphics[trim=24.5cm 0cm 28.5cm 5cm, clip, width=0.9\linewidth, page=9]{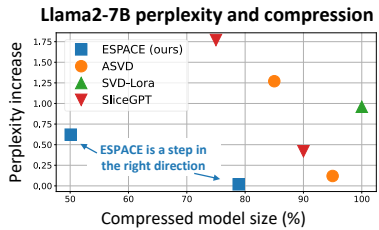}
\end{center}
\caption{\footnotesize Comparison to related works compressing Llama2-7B using matrix factorization techniques.}
\label{fig:llama2-7b}
\end{wrapfigure}

The Llama2-7B model has been used in related works on tensor decomposition mentioned in Section \ref{sec:related_works}; specifically, ASVD \cite{yuan2023asvd}, SVD-LoRa \cite{Badri_Shaji_2024}, and SliceGPT \cite{ashkboos2024slicegpt}. Both ASVD and sliceGPT have reported perplexity on Wikitext, but SVD-LoRa performed task-specific finetuning on a variety of datasets and averaged perplexities. Therefore, in Figure \ref{fig:llama2-7b}, we compare our results to these works using perplexity increase over baseline, rather than raw perplexity, for maximum inclusivity.

SVD-LoRa performed an SVD decomposition on the weights such that the intermediate dimension is half of dot-product which leads to no compression. On the other hand, ASVD and sliceGPT can only achieve modest compression ratios of up to 25\% with some loss in accuracy. Recall that these works apply factorization on weights which is the fundamental difference to ESPACE. As seen in Figure \ref{fig:llama2-7b}, ESPACE is a step in the right direction towards improving the state-of-the-art in tensor decomposition of LLMs. 

\subsection{Compression of Nemotron4-15B}
Finally, we used ESPACE to compress Nemotron4-15B into 9.54 and 6.25 billion parameters, as reported in Table \ref{table:results}. Retraining consumed 275B tokens which corresponds to $\sim$ 3\% of this model's original training session. Once more, compression with ESPACE leads to minimal degradation in the moderate regime (25\%) and yields a small accuracy drop in the aggressive regime (50\%).

Consistently with our findings for the above models, ESPACE reduces GEMM execution time by up to 46\%. This, in turn, improves the TTFT by up to 26\%. An interesting observation is that, for Llama2 and Nemotron4 models, the TTFT improvement is slightly less pronounced than for GPT3 models. This is simply due to the fact that the latter uses a sequence length of 2048, whereas the former two use 4096. A larger sequence length means more time is spent in attention cross-activation products which amortizes the speed-up in the GEMM layers.

\section{Conclusion}
We have presented ESPACE, a novel compression technique realizing tensor decomposition of LLMs in an activation-centric manner. A set of theoretical results were derived to guide the construction of activation projection which is done statically. Experimentally, we have shown promising results where ESPACE is able to $\sim$50\% compress modern LLMs at the cost of a small accuracy degradation. Compared to related works, ESPACE is a first step in pushing the frontier of model size versus accuracy trade-offs. Future work includes combining ESPACE with alternate compression techniques such as quantization and pruning, and evaluating decomposition of activation tensors in attention. As potential extension to our algorithm, the use of matrix sketching and random projections may pave the way for better overall compressibility.

\bibliography{refs}

\begin{thebibliography}{10}

\bibitem{xue2024repeat}
F.~Xue, Y.~Fu, W.~Zhou, Z.~Zheng, and Y.~You, ``To repeat or not to repeat: Insights from scaling llm under token-crisis,'' {\em Advances in Neural Information Processing Systems}, vol.~37, 2023.

\bibitem{liu2024understanding}
Y.~Liu, H.~He, T.~Han, X.~Zhang, M.~Liu, J.~Tian, Y.~Zhang, J.~Wang, X.~Gao, T.~Zhong, {\em et~al.}, ``Understanding llms: A comprehensive overview from training to inference,'' {\em arXiv preprint arXiv:2401.02038}, 2024.

\bibitem{xiao2023smoothquant}
G.~Xiao, J.~Lin, M.~Seznec, H.~Wu, J.~Demouth, and S.~Han, ``Smoothquant: Accurate and efficient post-training quantization for large language models,'' in {\em International Conference on Machine Learning}, pp.~38087--38099, PMLR, 2023.

\bibitem{frantar2023sparsegpt}
E.~Frantar and D.~Alistarh, ``Sparsegpt: Massive language models can be accurately pruned in one-shot,'' in {\em International Conference on Machine Learning}, pp.~10323--10337, PMLR, 2023.

\bibitem{lin2023awq}
J.~Lin, J.~Tang, H.~Tang, S.~Yang, X.~Dang, and S.~Han, ``Awq: Activation-aware weight quantization for llm compression and acceleration,'' {\em arXiv preprint arXiv:2306.00978}, 2023.

\bibitem{frantar2022gptq}
E.~Frantar, S.~Ashkboos, T.~Hoefler, and D.~Alistarh, ``Gptq: Accurate post-training quantization for generative pre-trained transformers,'' {\em arXiv preprint arXiv:2210.17323}, 2022.

\bibitem{ma2023llm_pruner}
X.~Ma, G.~Fang, and X.~Wang, ``Llm-pruner: On the structural pruning of large language models,'' {\em Advances in neural information processing systems}, vol.~36, pp.~21702--21720, 2023.

\bibitem{mirzadeh2023relu}
I.~Mirzadeh, K.~Alizadeh, S.~Mehta, C.~C. Del~Mundo, O.~Tuzel, G.~Samei, M.~Rastegari, and M.~Farajtabar, ``Relu strikes back: Exploiting activation sparsity in large language models,'' {\em arXiv preprint arXiv:2310.04564}, 2023.

\bibitem{yu2022continuousbatching}
G.-I. Yu, J.~S. Jeong, G.-W. Kim, S.~Kim, and B.-G. Chun, ``Orca: A distributed serving system for $\{$Transformer-Based$\}$ generative models,'' in {\em 16th USENIX Symposium on Operating Systems Design and Implementation (OSDI 22)}, pp.~521--538, 2022.

\bibitem{kim2024speculativedecoding}
S.~Kim, K.~Mangalam, S.~Moon, J.~Malik, M.~W. Mahoney, A.~Gholami, and K.~Keutzer, ``Speculative decoding with big little decoder,'' {\em Advances in Neural Information Processing Systems}, vol.~36, 2024.

\bibitem{kwon2023pagedattn}
W.~Kwon, Z.~Li, S.~Zhuang, Y.~Sheng, L.~Zheng, C.~H. Yu, J.~Gonzalez, H.~Zhang, and I.~Stoica, ``Efficient memory management for large language model serving with pagedattention,'' in {\em Proceedings of the 29th Symposium on Operating Systems Principles}, pp.~611--626, 2023.

\bibitem{zheng2023radixattn}
L.~Zheng, L.~Yin, Z.~Xie, J.~Huang, C.~Sun, C.~H. Yu, S.~Cao, C.~Kozyrakis, I.~Stoica, J.~E. Gonzalez, {\em et~al.}, ``Efficiently programming large language models using sglang,'' {\em arXiv preprint arXiv:2312.07104}, 2023.

\bibitem{dao2022flashattention}
T.~Dao, D.~Fu, S.~Ermon, A.~Rudra, and C.~R{\'e}, ``Flashattention: Fast and memory-efficient exact attention with io-awareness,'' {\em Advances in Neural Information Processing Systems}, vol.~35, pp.~16344--16359, 2022.

\bibitem{edalati2021kronecker}
A.~Edalati, M.~Tahaei, A.~Rashid, V.~P. Nia, J.~J. Clark, and M.~Rezagholizadeh, ``Kronecker decomposition for gpt compression,'' {\em arXiv preprint arXiv:2110.08152}, 2021.

\bibitem{chen2023tsvd}
B.~Chen, H.~Chen, J.~He, F.~Sun, and S.~Jui, ``Ternary singular value decomposition as a better parameterized form in linear mapping,'' {\em arXiv preprint arXiv:2308.07641}, 2023.

\bibitem{xu2023tensorgpt}
M.~Xu, Y.~L. Xu, and D.~P. Mandic, ``Tensorgpt: Efficient compression of the embedding layer in llms based on the tensor-train decomposition,'' {\em arXiv preprint arXiv:2307.00526}, 2023.

\bibitem{gu2022heat}
J.~Gu, B.~Keller, J.~Kossaifi, A.~Anandkumar, B.~Khailany, and D.~Z. Pan, ``Heat: Hardware-efficient automatic tensor decomposition for transformer compression,'' {\em arXiv preprint arXiv:2211.16749}, 2022.

\bibitem{Badri_Shaji_2024}
H.~Badri and A.~Shaji, ``Low-rank pruning of llama2.'' \url{https://mobiusml.github.io/low-rank-llama2/}, 2024.

\bibitem{hu2021LoRa}
E.~J. Hu, Y.~Shen, P.~Wallis, Z.~Allen-Zhu, Y.~Li, S.~Wang, L.~Wang, and W.~Chen, ``Lora: Low-rank adaptation of large language models,'' {\em arXiv preprint arXiv:2106.09685}, 2021.

\bibitem{yuan2023asvd}
Z.~Yuan, Y.~Shang, Y.~Song, Q.~Wu, Y.~Yan, and G.~Sun, ``Asvd: Activation-aware singular value decomposition for compressing large language models,'' {\em arXiv preprint arXiv:2312.05821}, 2023.

\bibitem{ashkboos2024slicegpt}
S.~Ashkboos, M.~L. Croci, M.~G.~d. Nascimento, T.~Hoefler, and J.~Hensman, ``Slicegpt: Compress large language models by deleting rows and columns,'' {\em arXiv preprint arXiv:2401.15024}, 2024.

\bibitem{zhao2024galore}
J.~Zhao, Z.~Zhang, B.~Chen, Z.~Wang, A.~Anandkumar, and Y.~Tian, ``Galore: Memory-efficient llm training by gradient low-rank projection,'' {\em arXiv preprint arXiv:2403.03507}, 2024.

\bibitem{kingma2014adam}
D.~P. Kingma and J.~Ba, ``Adam: A method for stochastic optimization,'' {\em arXiv preprint arXiv:1412.6980}, 2014.

\bibitem{hinton2012rmsprop}
G.~Hinton, N.~Srivastava, and K.~Swersky, ``Neural networks for machine learning lecture 6a overview of mini-batch gradient descent,'' {\em Cited on}, vol.~14, no.~8, p.~2, 2012.

\bibitem{johnson2004clt}
O.~Johnson, {\em Information theory and the central limit theorem}.
\newblock World Scientific, 2004.

\bibitem{sakr2022octav}
C.~Sakr, S.~Dai, R.~Venkatesan, B.~Zimmer, W.~Dally, and B.~Khailany, ``Optimal clipping and magnitude-aware differentiation for improved quantization-aware training,'' in {\em International Conference on Machine Learning}, pp.~19123--19138, PMLR, 2022.

\bibitem{wu2020integer}
H.~Wu, P.~Judd, X.~Zhang, M.~Isaev, and P.~Micikevicius, ``Integer quantization for deep learning inference: Principles and empirical evaluation,'' {\em arXiv preprint arXiv:2004.09602}, 2020.

\bibitem{dai2023efficient}
S.~Dai, H.~Genc, R.~Venkatesan, and B.~Khailany, ``Efficient transformer inference with statically structured sparse attention,'' in {\em 2023 60th ACM/IEEE Design Automation Conference (DAC)}, pp.~1--6, IEEE, 2023.

\bibitem{abdi2010pca}
H.~Abdi and L.~J. Williams, ``Principal component analysis,'' {\em Wiley interdisciplinary reviews: computational statistics}, vol.~2, no.~4, pp.~433--459, 2010.

\bibitem{sakr2017analytical}
C.~Sakr, Y.~Kim, and N.~Shanbhag, ``Analytical guarantees on numerical precision of deep neural networks,'' in {\em International Conference on Machine Learning}, pp.~3007--3016, PMLR, 2017.

\bibitem{sakr2018analytical}
C.~Sakr and N.~Shanbhag, ``An analytical method to determine minimum per-layer precision of deep neural networks,'' in {\em 2018 IEEE International Conference on Acoustics, Speech and Signal Processing (ICASSP)}, pp.~1090--1094, IEEE, 2018.

\bibitem{sakr2019per}
C.~Sakr and N.~Shanbhag, ``Per-tensor fixed-point quantization of the back-propagation algorithm,'' in {\em 7th International Conference on Learning Representations, ICLR 2019}, 2019.

\bibitem{shoeybi2020megatronlm}
M.~Shoeybi, M.~Patwary, R.~Puri, P.~LeGresley, J.~Casper, and B.~Catanzaro, ``Megatron-lm: Training multi-billion parameter language models using model parallelism,'' 2020.

\bibitem{touvron2023llama}
H.~Touvron, L.~Martin, K.~Stone, P.~Albert, A.~Almahairi, Y.~Babaei, N.~Bashlykov, S.~Batra, P.~Bhargava, S.~Bhosale, {\em et~al.}, ``Llama 2: Open foundation and fine-tuned chat models,'' {\em arXiv preprint arXiv:2307.09288}, 2023.

\bibitem{nemotron}
J.~Parmar, S.~Prabhumoye, J.~Jennings, M.~Patwary, S.~Subramanian, D.~Su, C.~Zhu, D.~Narayanan, A.~Jhunjhunwala, A.~Dattagupta, {\em et~al.}, ``Nemotron-4 15b technical report,'' {\em arXiv preprint arXiv:2402.16819}, 2024.

\bibitem{merity2016pointer}
S.~Merity, C.~Xiong, J.~Bradbury, and R.~Socher, ``Pointer sentinel mixture models,'' 2016.

\bibitem{clark2019boolq}
C.~Clark, K.~Lee, M.-W. Chang, T.~Kwiatkowski, M.~Collins, and K.~Toutanova, ``Boolq: Exploring the surprising difficulty of natural yes/no questions,'' in {\em NAACL}, 2019.

\bibitem{zellers2019hellaswag}
R.~Zellers, A.~Holtzman, Y.~Bisk, A.~Farhadi, and Y.~Choi, ``Hellaswag: Can a machine really finish your sentence?,'' in {\em Proceedings of the 57th Annual Meeting of the Association for Computational Linguistics}, 2019.

\bibitem{Bisk2020piqa}
Y.~Bisk, R.~Zellers, R.~L. Bras, J.~Gao, and Y.~Choi, ``Piqa: Reasoning about physical commonsense in natural language,'' in {\em Thirty-Fourth AAAI Conference on Artificial Intelligence}, 2020.

\bibitem{lai-etal-2017-race}
G.~Lai, Q.~Xie, H.~Liu, Y.~Yang, and E.~Hovy, ``{RACE}: Large-scale {R}e{A}ding comprehension dataset from examinations,'' in {\em Proceedings of the 2017 Conference on Empirical Methods in Natural Language Processing}, (Copenhagen, Denmark), pp.~785--794, Association for Computational Linguistics, Sept. 2017.

\bibitem{ai2:winogrande}
S.~Keisuke, L.~B. Ronan, B.~Chandra, and C.~Yejin, ``Winogrande: An adversarial winograd schema challenge at scale,'' 2019.

\bibitem{eval-harness}
L.~Gao, J.~Tow, B.~Abbasi, S.~Biderman, S.~Black, A.~DiPofi, C.~Foster, L.~Golding, J.~Hsu, A.~Le~Noac'h, H.~Li, K.~McDonell, N.~Muennighoff, C.~Ociepa, J.~Phang, L.~Reynolds, H.~Schoelkopf, A.~Skowron, L.~Sutawika, E.~Tang, A.~Thite, B.~Wang, K.~Wang, and A.~Zou, ``A framework for few-shot language model evaluation,'' 12 2023.

\bibitem{smith2022mtnlg}
S.~Smith, M.~Patwary, B.~Norick, P.~LeGresley, S.~Rajbhandari, J.~Casper, Z.~Liu, S.~Prabhumoye, G.~Zerveas, V.~Korthikanti, {\em et~al.}, ``Using deepspeed and megatron to train megatron-turing nlg 530b, a large-scale generative language model,'' {\em arXiv preprint arXiv:2201.11990}, 2022.

\bibitem{park2020profit}
E.~Park and S.~Yoo, ``Profit: A novel training method for sub-4-bit mobilenet models,'' in {\em Computer Vision--ECCV 2020: 16th European Conference, Glasgow, UK, August 23--28, 2020, Proceedings, Part VI 16}, pp.~430--446, Springer, 2020.

\bibitem{nagel2022overcoming}
M.~Nagel, M.~Fournarakis, Y.~Bondarenko, and T.~Blankevoort, ``Overcoming oscillations in quantization-aware training,'' in {\em International Conference on Machine Learning}, pp.~16318--16330, PMLR, 2022.

\bibitem{kuzmin2024versus}
A.~Kuzmin, M.~Nagel, M.~Van~Baalen, A.~Behboodi, and T.~Blankevoort, ``Pruning vs quantization: Which is better?,'' {\em Advances in Neural Information Processing Systems}, vol.~36, 2024.

\bibitem{dery2024everybody}
L.~Dery, S.~Kolawole, J.-F. Kagey, V.~Smith, G.~Neubig, and A.~Talwalkar, ``Everybody prune now: Structured pruning of llms with only forward passes,'' {\em arXiv preprint arXiv:2402.05406}, 2024.

\bibitem{horn2012matrix}
R.~A. Horn and C.~R. Johnson, {\em Matrix analysis}.
\newblock Cambridge university press, 2012.

\bibitem{ma2024one_bit_llm}
S.~Ma, H.~Wang, L.~Ma, L.~Wang, W.~Wang, S.~Huang, L.~Dong, R.~Wang, J.~Xue, and F.~Wei, ``The era of 1-bit llms: All large language models are in 1.58 bits,'' {\em arXiv preprint arXiv:2402.17764}, 2024.

\bibitem{liu2023chipnemo}
M.~Liu, T.-D. Ene, R.~Kirby, C.~Cheng, N.~Pinckney, R.~Liang, J.~Alben, H.~Anand, S.~Banerjee, I.~Bayraktaroglu, {\em et~al.}, ``Chipnemo: Domain-adapted llms for chip design,'' {\em arXiv preprint arXiv:2311.00176}, 2023.

\bibitem{men2024shortgpt}
X.~Men, M.~Xu, Q.~Zhang, B.~Wang, H.~Lin, Y.~Lu, X.~Han, and W.~Chen, ``Shortgpt: Layers in large language models are more redundant than you expect,'' {\em arXiv preprint arXiv:2403.03853}, 2024.

\end{thebibliography}

\newpage
\begin{center}\framebox[1.1\width][c]{\Large Supplementary Material}\end{center}
\appendix
\section{Proofs of theoretical results}
\label{app:proofs}
In this first appendix, we provide proofs for the various theoretical results in Section \ref{sec:theory}.

\subsection{Proof of Theorem \ref{thm:mse}}
\label{app:thm:mse} 
For a pair of vectors $\mathbf{x}\in \mathbf{X}$ and $\tilde{\mathbf{x}}\in\tilde{\mathbf{X}}$, and using \eqref{eqn:vector_transform}, we have the squared error:
\begin{align*}
    \lVert\mathbf{x}-\tilde{\mathbf{x}}\rVert^2 &= \lVert\mathbf{x}\rVert^2 + \lVert\tilde{\mathbf{x}}\rVert^2 -2 \mathbf{x}^T\tilde{\mathbf{x}}= \lVert\mathbf{x}\rVert^2 + \lVert\tilde{\mathbf{x}}\rVert^2 -2\sum_{i=1}^L\left(\mathbf{p_i}^T\mathbf{x}\right)\mathbf{p_i}^T\mathbf{x}\\\Rightarrow\lVert\mathbf{x}-\tilde{\mathbf{x}}\rVert^2&= \lVert\mathbf{x}\rVert^2 + \lVert\tilde{\mathbf{x}}\rVert^2 -2\sum_{i=1}^L\left(\mathbf{p_i}^T\mathbf{x}\right)^2
\end{align*}
Furthermore, note that 
\begin{align*}
    \lVert\tilde{\mathbf{x}}\rVert^2 = \tilde{\mathbf{x}}^T\tilde{\mathbf{x}}=\left(\sum_{i=1}^L\left(\mathbf{p_i}^T\mathbf{x}\right)\mathbf{p_i}\right)^T\left(\sum_{i=1}^L\left(\mathbf{p_i}^T\mathbf{x}\right)\mathbf{p_i}\right)
\end{align*}
and since $\{\mathbf{p}_i\}_{i=1}^L$ are orthonromal, cross product terms vanish and we have: $\lVert\tilde{\mathbf{x}}\rVert^2=\sum_{i=1}^L\left(\mathbf{p_i}^T\mathbf{x}\right)^2$ which we plug back into the expression for the squared error:
\begin{align*}
    \lVert\mathbf{x}-\tilde{\mathbf{x}}\rVert^2=\lVert\mathbf{x}\rVert^2 + \sum_{i=1}^L\left(\mathbf{p_i}^T\mathbf{x}\right)^2 -2\sum_{i=1}^L\left(\mathbf{p_i}^T\mathbf{x}\right)^2 = \lVert\mathbf{x}\rVert^2 - \sum_{i=1}^L\left(\mathbf{p_i}^T\mathbf{x}\right)^2
\end{align*}
Furthermore, note that:
\begin{align*}
\left(\mathbf{p_i}^T\mathbf{x}\right)^2 = \left(\mathbf{p_i}^T\mathbf{x}\right)\left(\mathbf{p_i}^T\mathbf{x}\right) = \left(\mathbf{p_i}^T\mathbf{x}\right)\left(\mathbf{x}^T\mathbf{p_i}\right) = \mathbf{p_i}^T\left(\mathbf{x}\mathbf{x}^T\right)\mathbf{p_i}
\end{align*}
where we used the commutativity of dot product and associativity of matrix multiplication. Thus the squared error is given by:
\begin{align*}
    \lVert\mathbf{x}-\tilde{\mathbf{x}}\rVert^2=\lVert\mathbf{x}\rVert^2 - \sum_{i=1}^L\mathbf{p_i}^T\left(\mathbf{x}\mathbf{x}^T\right)\mathbf{p_i}
\end{align*}
Finally, we take expectation on both sides and obtain a formula for the MSE:
\begin{align*}
    \mathbbm{E}\left[\lVert\mathbf{x}-\tilde{\mathbf{x}}\rVert^2\right] = \mathbbm{E}\left[\lVert\mathbf{x}\rVert^2\right] - \mathbbm{E}\left[\sum_{i=1}^L\mathbf{p_i}^T\left(\mathbf{x}\mathbf{x}^T\right)\mathbf{p_i}\right]=\mathbbm{E}\left[\lVert\mathbf{x}\rVert^2\right] -\sum_{i=1}^L\mathbf{p_i}^T\mathbbm{E}\left[\mathbf{x}\mathbf{x}^T\right]\mathbf{p_i}
\end{align*}
where we used linearity of expectation and the fact that $\{\mathbf{p}_i\}_{i=1}^L$ are not random. In this formula for the MSE, $\mathbbm{E}\left[\lVert\mathbf{x}\rVert^2\right]$ does not depend on $\{\mathbf{p}_i\}_{i=1}^L$, and therefore, minimizing the MSE is equivalent to \textbf{maximizing} the following expression involving the auto-correlation matrix: 
\begin{align*}
    \sum_{i=1}^L\mathbf{p_i}^T\mathbbm{E}\left[\mathbf{x}\mathbf{x}^T\right]\mathbf{p_i} = \sum_{i=1}^L\mathbf{p_i}^T\mathbf{C}_{\mathbf{X}}\mathbf{p_i}
\end{align*}
where each term in the summation is a quadratic form on the positive semi-definite auto-correlation matrix $\mathbf{C}_{\mathbf{X}}$. Since $\{\mathbf{p}_i\}_{i=1}^L$ are orthonormal, this is an equivalent form of the Rayleigh quotient \cite{horn2012matrix} and the solution is to assign $\{\mathbf{p}_i\}_{i=1}^L$ as the $L$ principal eigenvectors of $\mathbf{C}_{\mathbf{X}}$. This concludes the proof of Theorem \ref{thm:mse}.

\subsection{Proof of Corollary \ref{cor:nmse}}
\label{app:cor:nmse}
The result can be readily obtained as a consequence of the following:
    \begin{align*}
        \mathbbm{E}\left[\frac{\left\lVert\tilde{\mathbf{x}}-\mathbf{x}\right\rVert^2}{\lVert \mathbf{x} \rVert^2}\right] &= \mathbbm{E}\left[\left\lVert\frac{\tilde{\mathbf{x}}}{\lVert \mathbf{x} \rVert}-\frac{\mathbf{x}}{\lVert \mathbf{x} \rVert}\right\rVert^2\right] = \mathbbm{E}_\mathbf{X}\left[\left\lVert\frac{\sum_{i=1}^L \langle \mathbf{p}_i , \mathbf{x} \rangle \mathbf{p}_i}{\lVert \mathbf{x} \rVert}-\frac{\mathbf{x}}{\lVert \mathbf{x} \rVert}\right\rVert^2\right] \\ \Rightarrow \mathbbm{E}\left[\frac{\left\lVert\tilde{\mathbf{x}}-\mathbf{x}\right\rVert^2}{\lVert \mathbf{x} \rVert^2}\right] &= \mathbbm{E}\left[\left\lVert\sum_{i=1}^L \langle \mathbf{p}_i , \frac{\mathbf{x}}{\lVert \mathbf{x} \rVert} \rangle \mathbf{p}_i-\frac{\mathbf{x}}{\lVert \mathbf{x} \rVert}\right\rVert^2\right]
    \end{align*}

Therefore, the setup is identical to that of Theorem \ref{thm:mse} and we may apply the same solution as Appendix \ref{app:thm:mse} above. The only difference is that activation vectors are $L_2$-normalized which is why $\hat{\mathbf{C}}_{\mathbf{X}}$ (which is also positive semi-definite) is used in lieu of $\mathbf{C}_{\mathbf{X}}$ in Corollary \ref{cor:nmse}.

\subsection{Proof of Proposition \ref{prop:bounds}}
\label{app:prop:bounds}
A prelimnary result needed is to show that for any activation vector, we have $\lVert \tilde{\mathbf{x}} \rVert^2<\lVert \mathbf{x} \rVert^2$. We first note that while $\tilde{\mathbf{x}}=\sum_{i=1}^L\langle\mathbf{p}_i,\mathbf{x}\rangle\mathbf{p}_i$ per \eqref{eqn:vector_transform}, we also have that $\mathbf{x}=\sum_{i=1}^K\langle\mathbf{p}_i,\mathbf{x}\rangle\mathbf{p}_i$, where $\{\mathbf{p}_i\}_{i=1}^K$ extend the set of orthonormal vectors $\{\mathbf{p}_i\}_{i=1}^L$ to be complete, i.e., equivalent to no truncation of columns of the full rank matrix $\mathbf{V}$ when constructing $\mathbf{P}$, regardless of the metric being optimized.

Using orthonormality of projection vectors, similar to the proof of Theorem \ref{thm:mse} in Appendix \ref{app:thm:mse} above, we obtain $\lVert\tilde{\mathbf{x}}\rVert^2=\sum_{i=1}^L\left(\mathbf{p_i}^T\mathbf{x}\right)^2$ and $\lVert\mathbf{x}\rVert^2=\sum_{i=1}^K\left(\mathbf{p_i}^T\mathbf{x}\right)^2$. Therefore:
\begin{align*}
    \lVert\mathbf{x}\rVert^2 - \lVert\tilde{\mathbf{x}}\rVert^2 = \sum_{i=L+1}^K\left(\mathbf{p_i}^T\mathbf{x}\right)^2 \geq 0 \Rightarrow \lVert \tilde{\mathbf{x}} \rVert^2<\lVert \mathbf{x} \rVert^2
\end{align*}
where we used the fact that a sum of non-negative quantities is non-negative.

Then for a scalar $y\in\mathbf{Y}$ and its counterpart $\tilde{y}\in\tilde{\mathbf{Y}}$, we have:
\begin{align*}
    \left(y-\tilde{y}\right)^2&=\left(\mathbf{w}^T\mathbf{x}-\mathbf{w}^T\tilde{\mathbf{x}}\right)^2 = \left(\mathbf{w}^T\mathbf{x}\right)^2+\left(\mathbf{w}^T\tilde{\mathbf{x}}\right)^2 -2\left(\mathbf{w}^T\mathbf{x}\mathbf{w}^T\tilde{\mathbf{x}}\right)\\
    ~&\leq \lVert \mathbf{w}\rVert^2\cdot\lVert\mathbf{x} \rVert^2 + \lVert \mathbf{w}\rVert^2\cdot\lVert\tilde{\mathbf{x}} \rVert^2 -2\left(\mathbf{w}^T\mathbf{x}\mathbf{w}^T\tilde{\mathbf{x}}\right) \\
    ~&\leq \lVert \mathbf{w}\rVert^2\cdot\lVert\mathbf{x} \rVert^2 + \lVert \mathbf{w}\rVert^2\cdot\lVert\mathbf{x} \rVert^2-2\left(\mathbf{w}^T\mathbf{x}\mathbf{w}^T\tilde{\mathbf{x}}\right)\\
    &=2\lVert \mathbf{w}\rVert^2\cdot\lVert\mathbf{x} \rVert^2-2\left(\mathbf{w}^T\mathbf{x}\mathbf{w}^T\tilde{\mathbf{x}}\right)
\end{align*}
where the first upper bound uses the Cauchy-Schwarz inequality while the second uses $\lVert \tilde{\mathbf{x}} \rVert^2<\lVert \mathbf{x} \rVert^2$ which we proved above. Taking expectations on both sides of the inequality yields the upper bound on GO-MSE in \eqref{eqn:gomse_bound}.

Next, when a first order Taylor approximation on the loss function is assumed, we have the following relation between the unperturbed loss value $\mathcal{L}$ and its counterpart $\tilde{\mathcal{L}}$ when an activation vector $\mathbf{x}$ is projected to $\tilde{\mathbf{x}}$ per \eqref{eqn:vector_transform}:
\begin{align*}
    &\tilde{\mathcal{L}} = \mathcal{L} + \mathbf{\nabla}_{\mathbf{x}}^T\left(\tilde{\mathbf{x}}-\mathbf{x}\right) \Rightarrow \tilde{\mathcal{L}} - \mathcal{L} = \mathbf{\nabla}_{\mathbf{x}}^T\tilde{\mathbf{x}} - \mathbf{\nabla}_{\mathbf{x}}^T\mathbf{x}\\
    &\Rightarrow \left(\tilde{\mathcal{L}} - \mathcal{L}\right)^2 = \left(\mathbf{\nabla}_{\mathbf{x}}^T\tilde{\mathbf{x}} - \mathbf{\nabla}_{\mathbf{x}}^T\mathbf{x}\right)^2 = \left(\mathbf{\nabla}_{\mathbf{x}}^T\tilde{\mathbf{x}}\right)^2 + \left(\mathbf{\nabla}_{\mathbf{x}}^T\mathbf{x}\right)^2 -2\left(\mathbf{\nabla}_{\mathbf{x}}^T\mathbf{x}\mathbf{\nabla}_{\mathbf{x}}^T\tilde{\mathbf{x}}\right)
\end{align*}
once more, we use the Cauchy-Schwarz inequality and the fact that $\lVert \tilde{\mathbf{x}} \rVert^2<\lVert \mathbf{x} \rVert^2$ to establish:
\begin{align*}
    \left(\mathbf{\nabla}_{\mathbf{x}}^T\mathbf{x}\right)^2 \leq \lVert \mathbf{\nabla}_{\mathbf{x}}\rVert^2\cdot\lVert\mathbf{x} \rVert^2 \quad\&\quad \left(\mathbf{\nabla}_{\mathbf{x}}^T\tilde{\mathbf{x}}\right)^2 \leq \lVert \mathbf{\nabla}_{\mathbf{x}}\rVert^2\cdot\lVert\tilde{\mathbf{x}}\rVert^2\leq \mathbf{\nabla}_{\mathbf{x}}\rVert^2\cdot\lVert\mathbf{x} \rVert^2
\end{align*}
which we plug into the difference in network losses above to obtain:
\begin{align*}
    \left(\tilde{\mathcal{L}} - \mathcal{L}\right)^2 \leq 2\lVert\mathbf{\nabla}_{\mathbf{x}}\rVert^2\cdot\lVert\mathbf{x} \rVert^2-2\left(\mathbf{\nabla}_{\mathbf{x}}^T\mathbf{x}\mathbf{\nabla}_{\mathbf{x}}^T\tilde{\mathbf{x}}\right)
\end{align*}
Taking expectations on both sides yields the upper bound on NL-MSE in \eqref{eqn:nlmse_bound}. This completes the proof of Proposition \ref{prop:bounds}.

\subsection{Proof of Theorem \ref{thm:bounds}}
\label{app:thm:bounds}
In order to minimize the upper bound on the GO-MSE in \eqref{eqn:gomse_bound}, note that it suffices to maximize the quantity $2\mathbbm{E}\left[ \langle\mathbf{w},\mathbf{x}\rangle \cdot\langle\mathbf{w},\tilde{\mathbf{x}}\rangle \right]$ since $2\mathbbm{E}\left[\lVert \mathbf{w} \rVert^2 \cdot \lVert \mathbf{x} \rVert^2 \right]$ does not depend on $\{\mathbf{p}_i\}_{i=1}^L$. We have the following:
\begin{align*}
    2 \langle\mathbf{w},\mathbf{x}\rangle \cdot\langle\mathbf{w},\tilde{\mathbf{x}}\rangle  &=2\mathbf{w}^T\mathbf{x}\mathbf{w}^T\left(\sum_{i=1}^L\left(\mathbf{p}_i^T\mathbf{x}\right)\mathbf{p}_i\right)= 2\sum_{i=1}^L\mathbf{w}^T\mathbf{x}\mathbf{w}^T\mathbf{p}_i\mathbf{p}_i^T\mathbf{x} \\
    ~&= \sum_{i=1}^L\left(\mathbf{w}^T\mathbf{x}\mathbf{w}^T\mathbf{p}_i\mathbf{p}_i^T\mathbf{x} + \mathbf{w}^T\mathbf{x}\mathbf{w}^T\mathbf{p}_i\mathbf{p}_i^T\mathbf{x} \right)
\end{align*}
But since the dot product is commutative, i.e., $\mathbf{a}^T\mathbf{b} = \mathbf{b}^T\mathbf{a}$ for any two vectors $\mathbf{a}$ and $\mathbf{b}$, we may rearrange each of the two identical terms inside the summation as follows:
\begin{align*}
    \mathbf{w}^T\mathbf{x}\mathbf{w}^T\mathbf{p}_i\mathbf{p}_i^T\mathbf{x} = \mathbf{p}_i^T\mathbf{x}\mathbf{x}^T\mathbf{w}\mathbf{w}^T\mathbf{p}_i \quad \& \quad \mathbf{w}^T\mathbf{x}\mathbf{w}^T\mathbf{p}_i\mathbf{p}_i^T\mathbf{x} = \mathbf{p}_i^T\mathbf{w}\mathbf{w}^T\mathbf{x}\mathbf{x}^T\mathbf{p}_i
\end{align*}
Therefore, we obtain:
\begin{align*}
2 \langle\mathbf{w},\mathbf{x}\rangle \cdot\langle\mathbf{w},\tilde{\mathbf{x}}\rangle &=\sum_{i=1}^L\left(\mathbf{p}_i^T\mathbf{x}\mathbf{x}^T\mathbf{w}\mathbf{w}^T\mathbf{p}_i+ \mathbf{p}_i^T\mathbf{w}\mathbf{w}^T\mathbf{x}\mathbf{x}^T\mathbf{p}_i\right) \\
&=\sum_{i=1}^L\mathbf{p}_i^T\left(\mathbf{x}\mathbf{x}^T\mathbf{w}\mathbf{w}^T+\mathbf{w}\mathbf{w}^T\mathbf{x}\mathbf{x}^T\right)\mathbf{p}_i
\end{align*}
Taking expectations, we find that the quantity that needs to be maximized in order to minimize the bound on GO-MSE in \eqref{eqn:gomse_bound} is:
\begin{align*}
\sum_{i=1}^L\mathbf{p}_i^T\mathbbm{E}\left[\mathbf{x}\mathbf{x}^T\mathbf{w}\mathbf{w}^T+\mathbf{w}\mathbf{w}^T\mathbf{x}\mathbf{x}^T\right]\mathbf{p}_i
\end{align*}
Similar to the proof of Theorem \ref{thm:mse} in Appendix \ref{app:thm:mse}, this is yet again a sum of a quadratic form over the orthonormal set of vectors $\{\mathbf{p}_i\}_{i=1}^L$ and the solution is therefore to assign these vectors as the $L$ principal vectors of $\mathbf{C}=\mathbbm{E}\left[\mathbf{x}\mathbf{x}^T\mathbf{w}\mathbf{w}^T+\mathbf{w}\mathbf{w}^T\mathbf{x}\mathbf{x}^T\right]$ as per \eqref{eqn:corr_both}.

Note that the derivation above decomposed the dot products to obtain a quadratic form on the matrix $\mathbf{x}\mathbf{x}^T\mathbf{w}\mathbf{w}^T+\mathbf{w}\mathbf{w}^T\mathbf{x}\mathbf{x}^T$ because its symmetry is required for real eigenvalue decomposition. Also note that if positive definiteness is not achieved, we sort absolute values of eigenvalues. Finally, observe that this solution requires no overhead on the calibration process. Indeed, assuming weights and activations are independent, we note that $\mathbbm{E}\left[\mathbf{x}\mathbf{x}^T\mathbf{w}\mathbf{w}^T+\mathbf{w}\mathbf{w}^T\mathbf{x}\mathbf{x}^T\right] = \mathbf{C}_\mathbf{X}\mathbf{C}_\mathbf{W}+\mathbf{C}_\mathbf{W}\mathbf{C}_\mathbf{X}$ where $\mathbf{C}_\mathbf{W}=\mathbbm{E}\left[\mathbf{w}\mathbf{w}^T\right]$ is the weight auto-correlation matrix, which can simply be calibrated as $\mathbf{C}_\mathbf{W}=\nicefrac{\mathbf{W}\mathbf{W}^T}{N}$. Thus we only require left and right scaling of the calibrated auto-correlation matrix.

Similarly, to minimize the upper bound on NL-MSE in \eqref{eqn:nlmse_bound}, it suffices to maximize $2\mathbbm{E}\left[ \langle\mathbf{\nabla}_{\mathbf{x}},\mathbf{x}\rangle \cdot\langle\mathbf{\nabla}_{\mathbf{x}},\tilde{\mathbf{x}}\rangle \right]$. Using the exact same derivation as the above, replacing $\mathbf{w}$ by $\mathbf{\nabla}_{\mathbf{x}}$, we obtain that the quantity to be maximized is
\begin{align*}
\sum_{i=1}^L\mathbf{p}_i^T\mathbbm{E}\left[\mathbf{x}\mathbf{x}^T\mathbf{\nabla}_{\mathbf{x}}\mathbf{\nabla}_{\mathbf{x}}^T+\mathbf{\nabla}_{\mathbf{x}}\mathbf{\nabla}_{\mathbf{x}}^T\mathbf{x}\mathbf{x}^T\right]\mathbf{p}_i
\end{align*}
which is done by assigning $\{\mathbf{p}_i\}_{i=1}^L$ as the $L$ principal vectors of $\mathbf{C}=\mathbbm{E}\left[\mathbf{x}\mathbf{x}^T\mathbf{\nabla}_{\mathbf{x}}\mathbf{\nabla}_{\mathbf{x}}^T+\mathbf{\nabla}_{\mathbf{x}}\mathbf{\nabla}_{\mathbf{x}}^T\mathbf{x}\mathbf{x}^T\right]$ as per \eqref{eqn:corr_both}. Calibrating this matrix does require an extra step, where we perform a backward pass to estimate activation gradients. For ease of implementation, in our results, we make an approximation on the per-sequence independence of activation vectors and their gradients. This greatly reduces the memory requirements of the calibration process. And for each sample sequence in the calibration set, we compute $\mathbf{C}^{(i)}=\nicefrac{\left(\mathbf{X}^{(i)}{\mathbf{X}^{(i)}}^T\mathbf{G}^{(i)}{\mathbf{G}^{(i)}}^T+\mathbf{G}^{(i)}{\mathbf{G}^{(i)}}^T\mathbf{X}^{(i)}{\mathbf{X}^{(i)}}^T\right)}{M^2} $ where $\mathbf{G}^{(i)}$ is the gradient tensor (whose vectors are instantiations of $\mathbf{\nabla}_{\mathbf{x}}$). As always, we end the calibration phase by averaging across samples: $\mathbf{C} = \nicefrac{\sum_{i=1}^B \mathbf{C}^{(i)}}{B}$. This completes the proof of Theorem \ref{thm:bounds}.

\section{Implementation details}\label{app:implementation}
In this appendix, we discuss all details behind our implementation in Section \ref{sec:experiments}. These details include per-model specific application of ESPACE as well as retraining recipes. We strive to provide excessive details such that independent reproducibility of our results is seamless. We also encourage readers to reach out to us (after blind reviewing) for any questions on implementations.

\subsection{Software implementation}
As was mentioned in Section \ref{sec:experiments}, our implementation is built on top of Megatron-LM \cite{shoeybi2020megatronlm} which itself is based on the Pytorch framework. We use Pytorch for all extra introductions needed by ESPACE except for eigenvalue decomposition. Our experiments were carried out in a cluster of A100 GPUs and use BF16 precision.

Specifically, during the calibration process, we use Pytorch to track the required auto-correlation matrices; this simply done by averaging repeated instantiations of $\mathbf{X}\mathbf{X}^T$ as described in Section \ref{sec:theory}. 

Once the calibration of auto-correlation matrices is over, we use DLPack to transfer them from the Pytorch framework to RAPIDS framework. We then use the CUPY library in RAPIDS to perform fast (a few milliseconds per auto-correlation matrix) eigenvalue decomposition on GPUs. After truncating eigenvectors, we send back the projection matrix to Pytorch using DLPack.

Once the projection matrix $\mathbf{P}$ is calibrated and inference/training is to be done using ESPACE, we simply insert a projection operation within the Megatron implementation to perform the operations in \eqref{eqn:espace_gemms} as appropriate. The projection matrices are inserted as Pytorch buffers, rather than parameters, since they do not get updated during training.

\subsection{ESPACE configurations}\label{app:configurations}
In Section \ref{sec:experiments}, we mentioned that ESPACE was applied at each layer such that the number of components $L$ satisfies two constraints: (a) be a power of two for best tensor core utilization, and (b) yield a compression of at least 50\% at that layer. The exact values of $L$ for each model and layer type are included in Table \ref{table:espace_configurations}. Note that the only exception corresponds to QKV layers in Llama2-13B and Nemotron4-15B, where we use a value of $L=2048$ which corresponds to a compression of $\sim$ 45\% instead of $>$50\% at least. This is only because this amount of compression is already significant that we didn't feel the need to push for $L=1024$, which would have lead to a compression of $>70\%$.

\begin{table}[!t]\scriptsize
\setlength{\tabcolsep}{2.75pt}
\renewcommand{\arraystretch}{1.25}
    \centering
    \caption{\footnotesize Number of components $L$ retained by ESPACE for each layer. For models implementing Tensor Parallelism (TP), we apply ESPACE \textit{per rank}.}
    \label{table:espace_configurations}
    ~\\~\\
    \begin{tabular}{|c|c|c|c|}
    \hline
     Attn QKV layers &  Attn Proj layers & FC1 (H-to-4H) layers & FC2 (4H-to-H) Layers\\
    \hline
    \multicolumn{4}{|c|}{GPT3-1.3B (TP=1)}\\
    \hline
     \specialcell{\textbf{GEMM Dimension}:\\ $K=2048 \quad N=6144$ \\ \textbf{ESPACE}: $L=512$ \\ \textbf{Compression}: 66\%} & \specialcell{\textbf{GEMM Dimension}:\\ $K=2048 \quad N=2048$ \\ \textbf{ESPACE}: $L=512$ \\ \textbf{Compression}: 50\%} & \specialcell{\textbf{GEMM Dimension}:\\ $K=2048 \quad N=8192$ \\ \textbf{ESPACE}: $L=512$ \\ \textbf{Compression}: 69\%} & \specialcell{\textbf{GEMM Dimension}:\\ $K=8192 \quad N=2048$ \\ \textbf{ESPACE}: $L=512$ \\ \textbf{Compression}: 69\%}\\ 
    \hline
    \multicolumn{4}{|c|}{GPT3-8B (TP=4)}\\
    \hline
     \specialcell{\textbf{GEMM Dimension}:\\ $K=4096 \quad N=12288$ \\ \textbf{GEMM per rank}:\\ $K=4096 \quad N=3072$\\  \textbf{ESPACE}: $L=1024$ \\ \textbf{Compression}: 66\%} & \specialcell{\textbf{GEMM Dimension}:\\ $K=4096 \quad N=4096$\\ \textbf{GEMM per rank}:\\ $K=1024 \quad N=4096$ \\ \textbf{ESPACE}: $L=256$ \\ \textbf{Compression}: 69\%} & \specialcell{\textbf{GEMM Dimension}:\\ $K=4096 \quad N=16384$ \\ \textbf{GEMM per rank}:\\ $K=4096 \quad N=4096$\\ \textbf{ESPACE}: $L=1024$ \\ \textbf{Compression}: 69\%} & \specialcell{\textbf{GEMM Dimension}:\\ $K=16384 \quad N=4096$\\ \textbf{GEMM per rank}:\\ $K=4096 \quad N=4096$ \\ \textbf{ESPACE}: $L=1024$ \\ \textbf{Compression}: 50\%}\\ 
    \hline
    \multicolumn{4}{|c|}{GPT3-22B (TP=8)}\\
    \hline
     \specialcell{\textbf{GEMM Dimension}:\\ $K=6144 \quad N=18432$ \\ \textbf{GEMM per rank}:\\ $K=6144 \quad N=2304$\\  \textbf{ESPACE}: $L=2048$ \\ \textbf{Compression}: 56\%} & \specialcell{\textbf{GEMM Dimension}:\\ $K=6144 \quad N=6144$\\ \textbf{GEMM per rank}:\\ $K=768 \quad N=6144$ \\ \textbf{ESPACE}: $L=256$ \\ \textbf{Compression}: 63\%} & \specialcell{\textbf{GEMM Dimension}:\\ $K=6144 \quad N=24576$ \\ \textbf{GEMM per rank}:\\ $K=6144 \quad N=3072$\\ \textbf{ESPACE}: $L=2048$ \\ \textbf{Compression}: 58\%} & \specialcell{\textbf{GEMM Dimension}:\\ $K=24576 \quad N=6144$\\ \textbf{GEMM per rank}:\\ $K=3072 \quad N=6144$ \\ \textbf{ESPACE}: $L=1024$ \\ \textbf{Compression}: 50\%}\\ 
    \hline
    \multicolumn{4}{|c|}{Llama2-7B (TP=4)}\\
    \hline
     \specialcell{\textbf{GEMM Dimension}:\\ $K=4096 \quad N=12288$ \\ \textbf{GEMM per rank}:\\ $K=4096 \quad N=3072$\\  \textbf{ESPACE}: $L=1024$ \\ \textbf{Compression}: 66\%} & \specialcell{\textbf{GEMM Dimension}:\\ $K=4096 \quad N=4096$\\ \textbf{GEMM per rank}:\\ $K=1024 \quad N=4096$ \\ \textbf{ESPACE}: $L=256$ \\ \textbf{Compression}: 69\%} & \specialcell{\textbf{GEMM Dimension}:\\ $K=4096 \quad N=22016$ \\ \textbf{GEMM per rank}:\\ $K=4096 \quad N=5504$\\ \textbf{ESPACE}: $L=1024$ \\ \textbf{Compression}: 69\%} & \specialcell{\textbf{GEMM Dimension}:\\ $K=11008 \quad N=4096$\\ \textbf{GEMM per rank}:\\ $K=2752 \quad N=4096$ \\ \textbf{ESPACE}: $L=512$ \\ \textbf{Compression}: 69\%}\\ 
    \hline
    \multicolumn{4}{|c|}{Llama2-13B (TP=8)}\\
    \hline
     \specialcell{\textbf{GEMM Dimension}:\\ $K=5120 \quad N=15360$ \\ \textbf{GEMM per rank}:\\ $K=5120 \quad N=1920$\\  \textbf{ESPACE}: $L=2048$ \\ \textbf{Compression}: 47\%} & \specialcell{\textbf{GEMM Dimension}:\\ $K=5120 \quad N=5120$\\ \textbf{GEMM per rank}:\\ $K=640 \quad N=5120$ \\ \textbf{ESPACE}: $L=256$ \\ \textbf{Compression}: 55\%} & \specialcell{\textbf{GEMM Dimension}:\\ $K=5120 \quad N=27648$ \\ \textbf{GEMM per rank}:\\ $K=5120 \quad N=3456$\\ \textbf{ESPACE}: $L=2048$ \\ \textbf{Compression}: 53\%} & \specialcell{\textbf{GEMM Dimension}:\\ $K=13824 \quad N=5120$\\ \textbf{GEMM per rank}:\\ $K=1728 \quad N=5120$ \\ \textbf{ESPACE}: $L=1024$ \\ \textbf{Compression}: 60\%}\\ 
    \hline
    \multicolumn{4}{|c|}{Nemotron4 (TP=8)}\\
    \hline
     \specialcell{\textbf{GEMM Dimension}:\\ $K=6144 \quad N=8192$ \\ \textbf{GEMM per rank}:\\ $K=6144 \quad N=1024$\\  \textbf{ESPACE}: $L=2048$ \\ \textbf{Compression}: 42\%} & \specialcell{\textbf{GEMM Dimension}:\\ $K=6144 \quad N=6144$\\ \textbf{GEMM per rank}:\\ $K=768 \quad N=6144$ \\ \textbf{ESPACE}: $L=256$ \\ \textbf{Compression}: 67\%} & \specialcell{\textbf{GEMM Dimension}:\\ $K=6144 \quad N=24576$ \\ \textbf{GEMM per rank}:\\ $K=6144 \quad N=3072$\\ \textbf{ESPACE}: $L=2048$ \\ \textbf{Compression}: 58\%} & \specialcell{\textbf{GEMM Dimension}:\\ $K=24576 \quad N=6144$\\ \textbf{GEMM per rank}:\\ $K=3072 \quad N=6144$ \\ \textbf{ESPACE}: $L=1024$ \\ \textbf{Compression}: 50\%}\\ 
    \hline
    \end{tabular}
\end{table}

\subsection{Retraining hyperparameters}
By and large, we use the exact same recipe that was used to pretrain the open source GPT3 models \cite{shoeybi2020megatronlm}. As mentioned in Section \ref{sec:experiments}, the only modification to hyperparameters is disabling dropout and weight decay, and identical hyperparameters are used for both sets of experiments on GPT3 and Llama2 families. The only arbitrary choices we had to make was on the selection of learning rate schedule and global batch size. We use a cosine decay for all runs, and remaining choicesa are as follows:
\begin{itemize}[leftmargin=1\baselineskip,itemsep=0\baselineskip,topsep=0pt]
\item For GPT3-1.3B, the initial learning rate is set to $1.0\times 10^{-4}$, the final learning rate is set to $1.0\times 10^{-5}$, and the global batch size is set to 512.
\item For GPT3-8B, the initial learning rate is set to $5.0\times 10^{-5}$, the final learning rate is set to $5.0\times 10^{-6}$, and the global batch size is set to 512.
\item For GPT3-22B, the initial learning rate is set to $5.0\times 10^{-5}$, the final learning rate is set to $5.0\times 10^{-6}$, and the global batch size is set to 1024.
\item For Llama2-7B and Llama2-13B, training is done in two stages (each of 100B tokens). In the first stage, the initial learning rate is set to $5.0\times 10^{-4}$, and the final learning rate is set to $5.0\times 10^{-5}$. In the second stage, the initial learning rate is set to $5.0\times 10^{-5}$, and the final learning rate is set to $5.0\times 10^{-6}$. For both stages, the global batch size is set to 256.
\item For Nemotron4-15B, the initial learning rate is set to $1.0\times 10^{-5}$, the final learning rate is set to 0, and the global batch size is set to 512.
\end{itemize}
We did not perform hyperparameter tuning, the above was purely arbitrary, but based on the following intuition:
\begin{itemize}[leftmargin=1\baselineskip,itemsep=0\baselineskip,topsep=0pt]
\item For GPT3 models, we use a smaller learning rate for larger models, and start with a learning rate $10\times$ smaller than it's pre-training value. We use identical batch sizes as pre-training.
\item For Llama2 models, as pre-training hyperparameters are undisclosed, we use our best guess of what \textit{could} work well. The two stage training approach is inspired by a recent work on 1.58-bit LLMs \cite{ma2024one_bit_llm}, while the choice of a batch size of 256 is inspired by ChipNemo \cite{liu2023chipnemo}.
\end{itemize}

\subsection{GEMM latency measurements}
\label{app:latency}
Here we describe the methodology employed to measure GEMM latency as reported in Table \ref{table:results}. We assume a batch size of 1, such that the $M$ dimension of tensor $\mathbf{X}$ equals the sequence length (2048 and 4096 for GPT3 and Llama2 models, respectively). We also assume a single-GPU implementation throughout, such that any tensor parallelism is first folded into single GEMM per-layer. Similar to our accuracy experiments, we use BF16 precision for all latency measurements.

For each GEMM layer implementing either \eqref{eqn:gemm} or \eqref{eqn:espace_gemms}, we measure its latency individually. In Table \ref{table:results}, we report aggregated measurements depending on the model configuration. Specifically, we measure latency of computing QKV, Proj, FC1, and FC2 GEMMs with dimensions listed in Table \ref{table:espace_configurations}, and then add all results together for each transformer block of the corresponding model. 

We measure individual GEMM latencies in Pytorch. Specifically, for each configuration, we sample 1000 set of matrices of appropriate dimension and compute the appropriate GEMM. We synchronize before and after the computation occurs, and record times after synchronization. The elapsed times are averaged and then aggregated. Since the measurements were taken with native PyTorch code, we note that the implementation is un-optimized.  Further improvements could be possible in future work from removing PyTorch overheads, implementing custom fused kernels, or other optimizations.
\label{app:sensitivities}
\begin{figure}[!t]
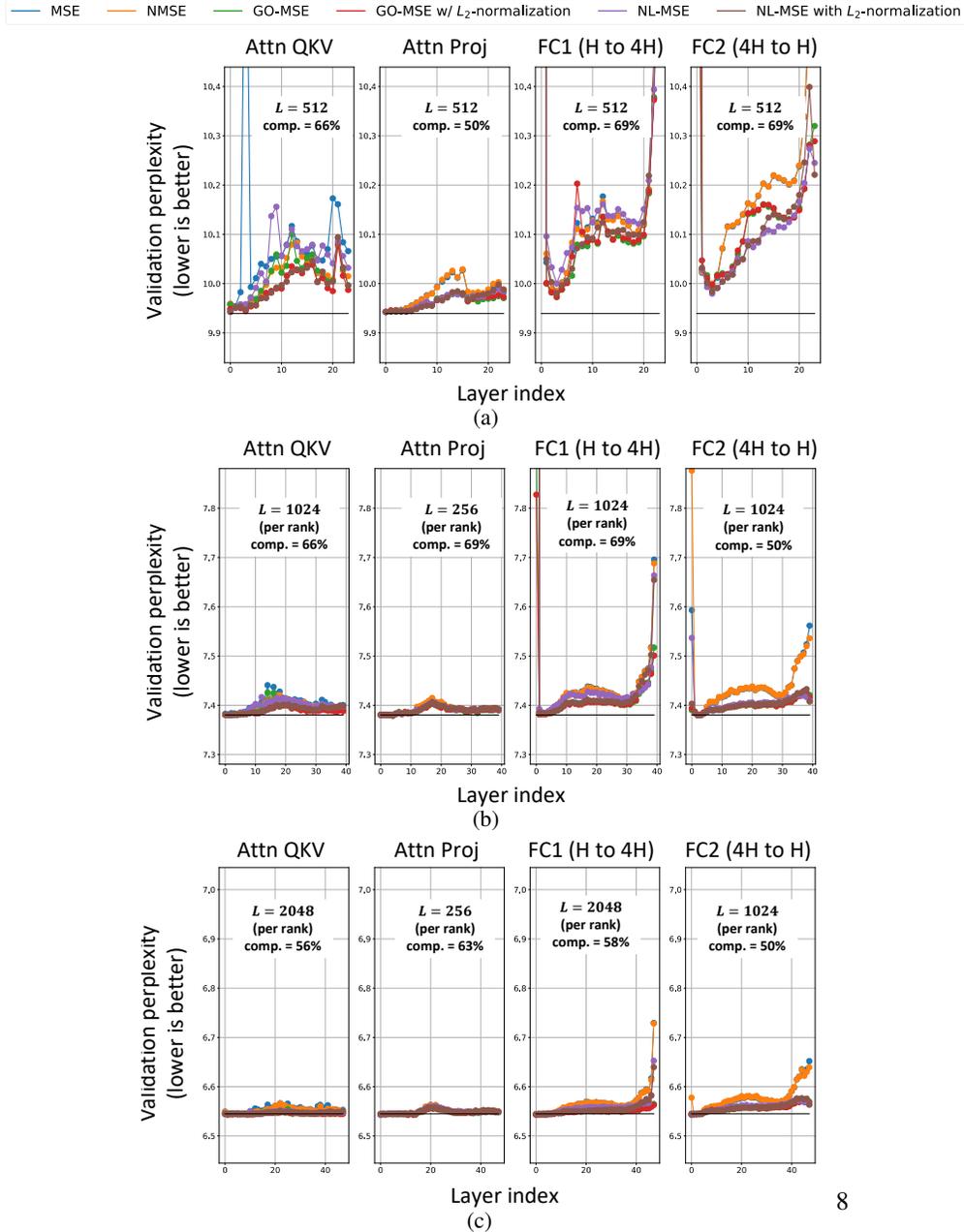

\begin{center}
    \includegraphics[trim=0cm 17cm 19cm 0cm, clip, width = 0.95\linewidth,page=6]{espace_paper_figures.pdf}
    \begin{subfigure}[t]{0.7\textwidth}
    \centering     
    \includegraphics[trim=21.5cm 0.5cm 22cm 0.5cm, clip, width = 0.95\linewidth,page=11]{espace_paper_figures.pdf}
    \caption{}    
    \end{subfigure}\\
    \begin{subfigure}[t]{0.7\textwidth}
    \centering     
    \includegraphics[trim=21.5cm 0.5cm 22cm 0.5cm, clip, width = 0.95\linewidth,page=12]{espace_paper_figures.pdf}
    \caption{}    
    \end{subfigure}\\
    \begin{subfigure}[t]{0.7\textwidth}
    \centering     
    \includegraphics[trim=21.5cm 0.5cm 22cm 0.5cm, clip, width = 0.95\linewidth,page=13]{espace_paper_figures.pdf}
    \caption{}    
    \end{subfigure}8
\end{center}
\caption{\footnotesize Sensitivity studies on the choice of projection construction for (a) GPT3-1.3B, (b) GPT3-8B, (c) GPT3-22B. For each layer, we apply ESPACE out-of-the-box using the six various candidates for the projection matrix $\mathbf{P}$ constructed in Section \ref{sec:theory}. The black line corresponds to the baseline perplexity.}
\label{fig:sensistivities_gpt}
\end{figure}
\begin{figure}[!t]
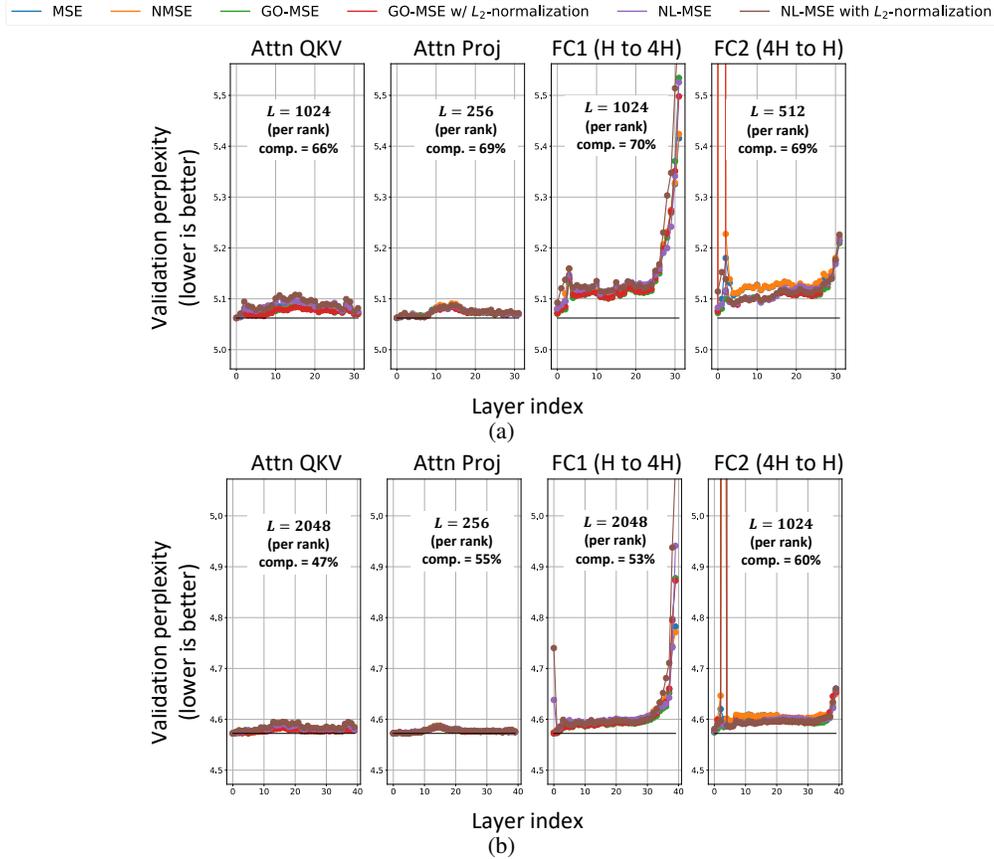

\begin{center}
    \includegraphics[trim=0cm 17cm 19cm 0cm, clip, width = 0.95\linewidth,page=6]{espace_paper_figures.pdf}
    
    \begin{subfigure}[t]{0.7\textwidth}
    \centering     
    \includegraphics[trim=21.5cm 0.5cm 22cm 0.5cm, clip, width = 0.95\linewidth,page=14]{espace_paper_figures.pdf}
    \caption{}    
    \end{subfigure}\\
    \begin{subfigure}[t]{0.7\textwidth}
    \centering     
    \includegraphics[trim=21.5cm 0.5cm 22cm 0.5cm, clip, width = 0.95\linewidth,page=15]{espace_paper_figures.pdf}
    \caption{}    
    \end{subfigure}
\end{center}
\caption{\footnotesize Sensitivity studies on the choice of projection construction for (a) Llama2-7B, (b) Llama2-13B. For each layer, we apply ESPACE out-of-the-box using the six various candidates for the projection matrix $\mathbf{P}$ constructed in Section \ref{sec:theory}. The black line corresponds to the baseline perplexity.}
\label{fig:sensistivities_llama}
\end{figure}
\begin{figure}[!t]
\begin{center}
    \includegraphics[trim=14cm 0.5cm 17.5cm 0.25cm, clip, width = 0.85\linewidth,page=16]{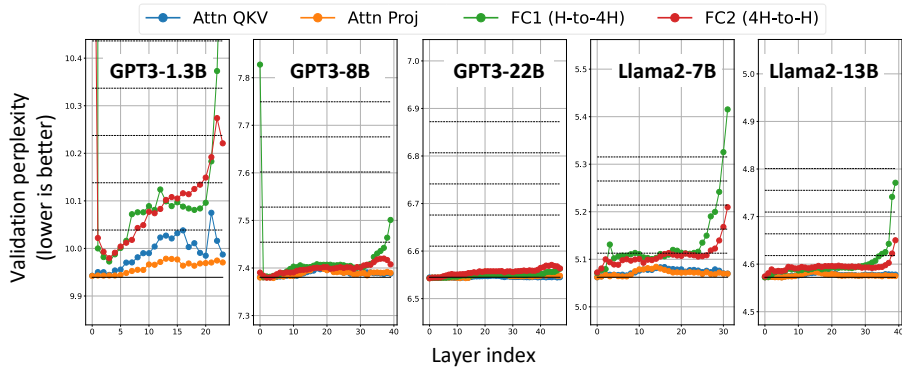}
\end{center}
\caption{\footnotesize Validation perplexity when ESPACE is applied out-of-the-box one layer at a time using the best calibrated projection matrix $\mathbf{P}$ as identified by the sensitivity study in Figures \ref{fig:sensistivities_gpt} and Figures \ref{fig:sensistivities_llama}. The black line corresponds to the baseline perplexity and the dashed lines correspond to 1\% increments over it.}
\label{fig:best_per_layer}
\end{figure}
\begin{figure}[!t]
\begin{center}
    \includegraphics[trim=14cm 0cm 25cm 0.25cm, clip, width = 0.75\linewidth,page=17]{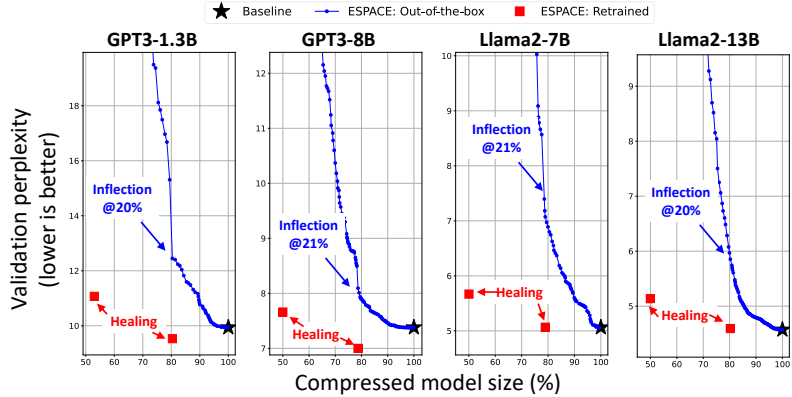}
\end{center}
\caption{\footnotesize Progressive out-of-the-box application of ESPACE on GPT3\{1.3B, 8B\} and Llama2-\{7B, 13B\}. The plot for GPT3-22B was provided in the main text in Figure \ref{fig:gpt3_22b}. The progressive application of ESPACE is based on the ranking of layers from least to most destructive based on validation perplexity sensistivity in Figure \ref{fig:best_per_layer}.}
\label{fig:prog_sweeps}
\end{figure}
\section{Additional experimental results}
\label{app:additonal_exp}
In this appendix, we include additional experimental results that were not included in the main paper. These results are not essential to the description of our work nor its conclusion, and the main paper integrally contains all essential information related to our contribution. The additional results listed in this appendix are for the benefit of readers interested in going further and learning about fine-grained details behind the main results of Section \ref{sec:experiments}.

\subsection{Sensitivity studies on the construction of projection matrix}
In Section \ref{sec:theory}, we presented theoretical results leading to six choices for the construction of projection matrix $\mathbf{P}$. These constructions were done in a manner to optimize one of the following fidelity metrics: MSE, NMSE, GO-MSE, GO-MSE with $L_2$-normalization, NL-MSE, and NL-MSE with $L_2$-normalization. Here we show the impact of various choices of $\mathbf{P}$ constructions on the validation perplexity, at a per-layer granularity. Specifically, we apply the ESPACE projection out-of-the-box one layer at a time, for each of the six candidates, and evaluate the resulting validation perplexity. In this way, we are able to determine which of the six candidate choices of $\mathbf{P}$ works best at each layer.

The results of this sensitivity analysis are included in Figures \ref{fig:sensistivities_gpt} and \ref{fig:sensistivities_llama} for GPT3 and Llama2 models, respectively. It is shown that the best choice of projection matrix $\mathbf{P}$ depends on layer instance, and there is no clear pattern to find out which solution works best \textit{a priori}. This result justifies the need to optimize several proxy metric for accuracy, not just the MSE of activation approximation. Particularly, most solutions do appear to be related to GO-MSE and NL-MSE, as well as thei $L_2$-normalized variants. Therefore, these results provide supporting evidence on the importance of the results in Proposition \ref{prop:bounds} and Theorem \ref{app:thm:bounds}. This also validates the choice of using bounds on GO-MSE and NL-MSE for optimization since closed form solution for the unbounded metrics are elusive.

\subsection{Progressive application of ESPACE to the layers of a network}
\label{app:prog_sweeps}

Once the best projection matrix $\mathbf{P}$ is identified for each layer, we plot the corresponding validation perplexity for out-of-the-box application of ESPACE at the corresponding layer using the corresponding choice of $\mathbf{P}$. These results are shown in Figure \ref{fig:best_per_layer}, where several observations are made. First, larger models have more resilience to out-of-the-box application of ESPACE; this observation was made in Section \ref{sec:experiments}. Second, it appears that FC1 layers are the most sensitive ones, followed by FC2, and QKV/Proj layers are generally robust to the application of ESPACE. Finally, we observe that in some instances, some layers close to the input and output (i.e., on either ends of the model) appear to be most sensitive to the application of ESPACE. This behavior was observed in other works on compression, such as shortGPT \cite{men2024shortgpt}.

As mentioned in Section \ref{sec:experiments}, layers are then sorted according to their impact on perplexity from least to most destructive. ESPACE is then progressively applied to out-of-the-box to all layers according to this ranking. In Figure \ref{fig:gpt3_22b} in the main text, we had shown the results corresponding to this applciation for GPT3-22B. In Figure \ref{fig:prog_sweeps}, we show similar results for the other four networks we experimented on, i.e., GPT3-\{1.3B, 8B\} and Llama2-\{7B, 13B\}. Similar to the findings on GPT3-22B, we do observe an inflection point after which accuracy degradation accelerates. This inflection occurs around 20\% for GPT3-\{1.3B, 8B\} and Llama2-\{7B, 13B\}. With retraining, the healing process recovers accuracy for all models, as detailed in Section \ref{sec:experiments}.

We note the following:
\begin{itemize}[leftmargin=1\baselineskip,itemsep=0\baselineskip,topsep=0pt]
\item For GPT3-1.3B, we exclude some layers from the application of ESPACE. These are the layers for which out-of-the-box application of ESPACE leads to a validation perplexity increase of more than 2\% compared to the baseline. These layers can be found in Figure \ref{fig:best_per_layer} and correspond to several FC1 and FC2 layers close to either ends of the model. For this reason, GPT3-1.3B is compressed to 47\% instead of 50\% in Section \ref{sec:experiments} and Table \ref{table:results}.
\item For GPT3-22B, we apply ESPACE to all layers since validation perplexity increase is very small in Figure \ref{fig:best_per_layer}. For this reason, the overall compression for GPT3-22B is slightly over 50\%; it is 55\% in Section \ref{sec:experiments} and Table \ref{table:results}.
\end{itemize}

\newpage
\section*{NeurIPS Paper Checklist}

\begin{enumerate}

\item {\bf Claims}
    \item[] Question: Do the main claims made in the abstract and introduction accurately reflect the paper's contributions and scope?
    \item[] Answer: \answerYes 
    \item[] Justification: \textit{We accurately report the paper's contribution in the abstract and introduction; this is why we chose to have a bulleted list in a standalone subsection in Section \ref{sec:contributions}. For clarity, we listed all contributions in the paper, in the order in which they appear: our proposal, followed by theoretical results, and summaries of experimental results.}
    \item[] Guidelines:
    \begin{itemize}
        \item The answer NA means that the abstract and introduction do not include the claims made in the paper.
        \item The abstract and/or introduction should clearly state the claims made, including the contributions made in the paper and important assumptions and limitations. A No or NA answer to this question will not be perceived well by the reviewers. 
        \item The claims made should match theoretical and experimental results, and reflect how much the results can be expected to generalize to other settings. 
        \item It is fine to include aspirational goals as motivation as long as it is clear that these goals are not attained by the paper. 
    \end{itemize}

\item {\bf Limitations}
    \item[] Question: Does the paper discuss the limitations of the work performed by the authors?
    \item[] Answer: \answerYes 
    \item[] Justification: \textit{In our discussion on related works in Section \ref{sec:related_works}, we have listed possible combinations of applying ESPACE alongside other methods (e.g., quantization or parameter efficient tuning), and have mentioned that this was not the scope of our paper. Furthermore, in the experimental Section \ref{sec:experiments}, we have discussed the limitations of only metricizing compression via model size and weight times activation GEMM latency reductions. We argued that a more nuanced study is needed on the inference cost implications as Transformers comprise other GEMMs, and the regime (context pre-fill versus auto-regressive phase) needs to be taken into account. We have mentioned that a detailed study on inference cost with ESPACE is part of future work.}
    \item[] Guidelines:
    \begin{itemize}
        \item The answer NA means that the paper has no limitation while the answer No means that the paper has limitations, but those are not discussed in the paper. 
        \item The authors are encouraged to create a separate "Limitations" section in their paper.
        \item The paper should point out any strong assumptions and how robust the results are to violations of these assumptions (e.g., independence assumptions, noiseless settings, model well-specification, asymptotic approximations only holding locally). The authors should reflect on how these assumptions might be violated in practice and what the implications would be.
        \item The authors should reflect on the scope of the claims made, e.g., if the approach was only tested on a few datasets or with a few runs. In general, empirical results often depend on implicit assumptions, which should be articulated.
        \item The authors should reflect on the factors that influence the performance of the approach. For example, a facial recognition algorithm may perform poorly when image resolution is low or images are taken in low lighting. Or a speech-to-text system might not be used reliably to provide closed captions for online lectures because it fails to handle technical jargon.
        \item The authors should discuss the computational efficiency of the proposed algorithms and how they scale with dataset size.
        \item If applicable, the authors should discuss possible limitations of their approach to address problems of privacy and fairness.
        \item While the authors might fear that complete honesty about limitations might be used by reviewers as grounds for rejection, a worse outcome might be that reviewers discover limitations that aren't acknowledged in the paper. The authors should use their best judgment and recognize that individual actions in favor of transparency play an important role in developing norms that preserve the integrity of the community. Reviewers will be specifically instructed to not penalize honesty concerning limitations.
    \end{itemize}

\item {\bf Theory Assumptions and Proofs}
    \item[] Question: For each theoretical result, does the paper provide the full set of assumptions and a complete (and correct) proof?
    \item[] Answer: \answerYes 
    \item[] Justification: \textit{We provide detailed proofs in Appendix \ref{app:proofs} for all results in the theoretical Section \ref{sec:theory}. Furthermore, following each Theorem, Proposition, and Corollary in Section \ref{sec:theory}, we also provide a teaser to the full proof in the main text, for the benefit of interested readers, where we try to provide the main intuition before referring to full proofs in Appendix \ref{app:proofs}}.
    \item[] Guidelines:
    \begin{itemize}
        \item The answer NA means that the paper does not include theoretical results. 
        \item All the theorems, formulas, and proofs in the paper should be numbered and cross-referenced.
        \item All assumptions should be clearly stated or referenced in the statement of any theorems.
        \item The proofs can either appear in the main paper or the supplemental material, but if they appear in the supplemental material, the authors are encouraged to provide a short proof sketch to provide intuition. 
        \item Inversely, any informal proof provided in the core of the paper should be complemented by formal proofs provided in appendix or supplemental material.
        \item Theorems and Lemmas that the proof relies upon should be properly referenced. 
    \end{itemize}

    \item {\bf Experimental Result Reproducibility}
    \item[] Question: Does the paper fully disclose all the information needed to reproduce the main experimental results of the paper to the extent that it affects the main claims and/or conclusions of the paper (regardless of whether the code and data are provided or not)?
    \item[] Answer: \answerYes 
    \item[] Justification: \textit{All details, including fine-grained configurations, software toolkit employed, and training recipes including hyperaparameters are included in Appendix \ref{app:implementation}. These are also briefly mentioned in the main paper in Section \ref{sec:experiments}, with references to Appendix \ref{app:implementation} for the readers interested in more details needed for full reproducibility.}
    \item[] Guidelines:
    \begin{itemize}
        \item The answer NA means that the paper does not include experiments.
        \item If the paper includes experiments, a No answer to this question will not be perceived well by the reviewers: Making the paper reproducible is important, regardless of whether the code and data are provided or not.
        \item If the contribution is a dataset and/or model, the authors should describe the steps taken to make their results reproducible or verifiable. 
        \item Depending on the contribution, reproducibility can be accomplished in various ways. For example, if the contribution is a novel architecture, describing the architecture fully might suffice, or if the contribution is a specific model and empirical evaluation, it may be necessary to either make it possible for others to replicate the model with the same dataset, or provide access to the model. In general. releasing code and data is often one good way to accomplish this, but reproducibility can also be provided via detailed instructions for how to replicate the results, access to a hosted model (e.g., in the case of a large language model), releasing of a model checkpoint, or other means that are appropriate to the research performed.
        \item While NeurIPS does not require releasing code, the conference does require all submissions to provide some reasonable avenue for reproducibility, which may depend on the nature of the contribution. For example
        \begin{enumerate}
            \item If the contribution is primarily a new algorithm, the paper should make it clear how to reproduce that algorithm.
            \item If the contribution is primarily a new model architecture, the paper should describe the architecture clearly and fully.
            \item If the contribution is a new model (e.g., a large language model), then there should either be a way to access this model for reproducing the results or a way to reproduce the model (e.g., with an open-source dataset or instructions for how to construct the dataset).
            \item We recognize that reproducibility may be tricky in some cases, in which case authors are welcome to describe the particular way they provide for reproducibility. In the case of closed-source models, it may be that access to the model is limited in some way (e.g., to registered users), but it should be possible for other researchers to have some path to reproducing or verifying the results.
        \end{enumerate}
    \end{itemize}

\item {\bf Open access to data and code}
    \item[] Question: Does the paper provide open access to the data and code, with sufficient instructions to faithfully reproduce the main experimental results, as described in supplemental material?
    \item[] Answer: \answerNo 
    \item[] Justification: \textit{As was mentioned in the previous question, we have provided all implementation details required to reproduce our results, including hyperparameters and a description of software implementation in Appendix \ref{app:implementation}. We note that our implementation is based on the open-source Megatron-LM \cite{shoeybi2020megatronlm} and that additions made to this software toolkit needed to reproduce our results are included in Appendix \ref{app:implementation}, where we also include an invitation to the reader to reach out to us (after blind reviewing is over) with any questions regarding reproducibility. As such, we believe the description of the work in the paper is sufficient for reproducibility; yet, we are happy to consider open sourcing our code in the future.}
    \item[] Guidelines:
    \begin{itemize}
        \item The answer NA means that paper does not include experiments requiring code.
        \item Please see the NeurIPS code and data submission guidelines (\url{https://nips.cc/public/guides/CodeSubmissionPolicy}) for more details.
        \item While we encourage the release of code and data, we understand that this might not be possible, so 'No' is an acceptable answer. Papers cannot be rejected simply for not including code, unless this is central to the contribution (e.g., for a new open-source benchmark).
        \item The instructions should contain the exact command and environment needed to run to reproduce the results. See the NeurIPS code and data submission guidelines (\url{https://nips.cc/public/guides/CodeSubmissionPolicy}) for more details.
        \item The authors should provide instructions on data access and preparation, including how to access the raw data, preprocessed data, intermediate data, and generated data, etc.
        \item The authors should provide scripts to reproduce all experimental results for the new proposed method and baselines. If only a subset of experiments are reproducible, they should state which ones are omitted from the script and why.
        \item At submission time, to preserve anonymity, the authors should release anonymized versions (if applicable).
        \item Providing as much information as possible in supplemental material (appended to the paper) is recommended, but including URLs to data and code is permitted.
    \end{itemize}

\item {\bf Experimental Setting/Details}
    \item[] Question: Does the paper specify all the training and test details (e.g., data splits, hyperparameters, how they were chosen, type of optimizer, etc.) necessary to understand the results?
    \item[] Answer: \answerYes 
    \item[] Justification: \textit{Yes, and this was already mentioned in our responses to the two questions above. We provide all fine-grained details of our implementation in Appendix \ref{app:implementation}, which is referred to in the main text.}
    \item[] Guidelines:
    \begin{itemize}
        \item The answer NA means that the paper does not include experiments.
        \item The experimental setting should be presented in the core of the paper to a level of detail that is necessary to appreciate the results and make sense of them.
        \item The full details can be provided either with the code, in appendix, or as supplemental material.
    \end{itemize}

\item {\bf Experiment Statistical Significance}
    \item[] Question: Does the paper report error bars suitably and correctly defined or other appropriate information about the statistical significance of the experiments?
    \item[] Answer: \answerYes 
    \item[] Justification: \textit{While we do not report error bars, we do evaluate our method on a variety of downstream tasks, for the specific purpose of making conclusions with statistical significance; rather than relying on one number here and there. We also note that this approach of evaluating on several tasks is also widely adopted in the community for works on compression of LLMs.}
    \item[] Guidelines:
    \begin{itemize}
        \item The answer NA means that the paper does not include experiments.
        \item The authors should answer "Yes" if the results are accompanied by error bars, confidence intervals, or statistical significance tests, at least for the experiments that support the main claims of the paper.
        \item The factors of variability that the error bars are capturing should be clearly stated (for example, train/test split, initialization, random drawing of some parameter, or overall run with given experimental conditions).
        \item The method for calculating the error bars should be explained (closed form formula, call to a library function, bootstrap, etc.)
        \item The assumptions made should be given (e.g., Normally distributed errors).
        \item It should be clear whether the error bar is the standard deviation or the standard error of the mean.
        \item It is OK to report 1-sigma error bars, but one should state it. The authors should preferably report a 2-sigma error bar than state that they have a 96\% CI, if the hypothesis of Normality of errors is not verified.
        \item For asymmetric distributions, the authors should be careful not to show in tables or figures symmetric error bars that would yield results that are out of range (e.g. negative error rates).
        \item If error bars are reported in tables or plots, The authors should explain in the text how they were calculated and reference the corresponding figures or tables in the text.
    \end{itemize}

\item {\bf Experiments Compute Resources}
    \item[] Question: For each experiment, does the paper provide sufficient information on the computer resources (type of compute workers, memory, time of execution) needed to reproduce the experiments?
    \item[] Answer: \answerYes 
    \item[] Justification: \textit{These are reported in Section \ref{sec:experiments} and Appendix \ref{app:implementation}.}
    \item[] Guidelines:
    \begin{itemize}
        \item The answer NA means that the paper does not include experiments.
        \item The paper should indicate the type of compute workers CPU or GPU, internal cluster, or cloud provider, including relevant memory and storage.
        \item The paper should provide the amount of compute required for each of the individual experimental runs as well as estimate the total compute. 
        \item The paper should disclose whether the full research project required more compute than the experiments reported in the paper (e.g., preliminary or failed experiments that didn't make it into the paper). 
    \end{itemize}
    
\item {\bf Code Of Ethics}
    \item[] Question: Does the research conducted in the paper conform, in every respect, with the NeurIPS Code of Ethics \url{https://neurips.cc/public/EthicsGuidelines}?
    \item[] Answer: \answerYes 
    \item[] Justification: \textit{Yes, we are only working on a method to compress LLMs using tensor decomposition of activations.}
    \item[] Guidelines:
    \begin{itemize}
        \item The answer NA means that the authors have not reviewed the NeurIPS Code of Ethics.
        \item If the authors answer No, they should explain the special circumstances that require a deviation from the Code of Ethics.
        \item The authors should make sure to preserve anonymity (e.g., if there is a special consideration due to laws or regulations in their jurisdiction).
    \end{itemize}

\item {\bf Broader Impacts}
    \item[] Question: Does the paper discuss both potential positive societal impacts and negative societal impacts of the work performed?
    \item[] Answer: \answerNA 
    \item[] Justification: \textit{As per the guideline below, our work falls under the umbrella of optimization to the implementation of LLMs. Similar to the example provided in the guideline, we believe that there is no need to point out societal implications of making LLMs run faster. }
    \item[] Guidelines:
    \begin{itemize}
        \item The answer NA means that there is no societal impact of the work performed.
        \item If the authors answer NA or No, they should explain why their work has no societal impact or why the paper does not address societal impact.
        \item Examples of negative societal impacts include potential malicious or unintended uses (e.g., disinformation, generating fake profiles, surveillance), fairness considerations (e.g., deployment of technologies that could make decisions that unfairly impact specific groups), privacy considerations, and security considerations.
        \item The conference expects that many papers will be foundational research and not tied to particular applications, let alone deployments. However, if there is a direct path to any negative applications, the authors should point it out. For example, it is legitimate to point out that an improvement in the quality of generative models could be used to generate deepfakes for disinformation. On the other hand, it is not needed to point out that a generic algorithm for optimizing neural networks could enable people to train models that generate Deepfakes faster.
        \item The authors should consider possible harms that could arise when the technology is being used as intended and functioning correctly, harms that could arise when the technology is being used as intended but gives incorrect results, and harms following from (intentional or unintentional) misuse of the technology.
        \item If there are negative societal impacts, the authors could also discuss possible mitigation strategies (e.g., gated release of models, providing defenses in addition to attacks, mechanisms for monitoring misuse, mechanisms to monitor how a system learns from feedback over time, improving the efficiency and accessibility of ML).
    \end{itemize}
    
\item {\bf Safeguards}
    \item[] Question: Does the paper describe safeguards that have been put in place for responsible release of data or models that have a high risk for misuse (e.g., pretrained language models, image generators, or scraped datasets)?
    \item[] Answer: \answerNA 
    \item[] Justification: \textit{There are no such risks associated with our work, and as such the paper does not describe safeguards.}
    \item[] Guidelines:
    \begin{itemize}
        \item The answer NA means that the paper poses no such risks.
        \item Released models that have a high risk for misuse or dual-use should be released with necessary safeguards to allow for controlled use of the model, for example by requiring that users adhere to usage guidelines or restrictions to access the model or implementing safety filters. 
        \item Datasets that have been scraped from the Internet could pose safety risks. The authors should describe how they avoided releasing unsafe images.
        \item We recognize that providing effective safeguards is challenging, and many papers do not require this, but we encourage authors to take this into account and make a best faith effort.
    \end{itemize}

\item {\bf Licenses for existing assets}
    \item[] Question: Are the creators or original owners of assets (e.g., code, data, models), used in the paper, properly credited and are the license and terms of use explicitly mentioned and properly respected?
    \item[] Answer: \answerYes 
    \item[] Justification: \textit{We have done our best to provide credit to any prior work upon which we have built ours. This is mostly the case for Megatron-LM \cite{shoeybi2020megatronlm}, which we used for our experiments, and cited extensively throughout Section \ref{sec:experiments} and Appendix \ref{app:implementation}.}
    \item[] Guidelines:
    \begin{itemize}
        \item The answer NA means that the paper does not use existing assets.
        \item The authors should cite the original paper that produced the code package or dataset.
        \item The authors should state which version of the asset is used and, if possible, include a URL.
        \item The name of the license (e.g., CC-BY 4.0) should be included for each asset.
        \item For scraped data from a particular source (e.g., website), the copyright and terms of service of that source should be provided.
        \item If assets are released, the license, copyright information, and terms of use in the package should be provided. For popular datasets, \url{paperswithcode.com/datasets} has curated licenses for some datasets. Their licensing guide can help determine the license of a dataset.
        \item For existing datasets that are re-packaged, both the original license and the license of the derived asset (if it has changed) should be provided.
        \item If this information is not available online, the authors are encouraged to reach out to the asset's creators.
    \end{itemize}

\item {\bf New Assets}
    \item[] Question: Are new assets introduced in the paper well documented and is the documentation provided alongside the assets?
    \item[] Answer: \answerNA 
    \item[] Justification: \textit{The paper does not release new assets.}
    \item[] Guidelines:
    \begin{itemize}
        \item The answer NA means that the paper does not release new assets.
        \item Researchers should communicate the details of the dataset/code/model as part of their submissions via structured templates. This includes details about training, license, limitations, etc. 
        \item The paper should discuss whether and how consent was obtained from people whose asset is used.
        \item At submission time, remember to anonymize your assets (if applicable). You can either create an anonymized URL or include an anonymized zip file.
    \end{itemize}

\item {\bf Crowdsourcing and Research with Human Subjects}
    \item[] Question: For crowdsourcing experiments and research with human subjects, does the paper include the full text of instructions given to participants and screenshots, if applicable, as well as details about compensation (if any)? 
    \item[] Answer: \answerNA 
    \item[] Justification: \textit{The paper does not involve crowdsourcing nor research with human subjects.}
    \item[] Guidelines:
    \begin{itemize}
        \item The answer NA means that the paper does not involve crowdsourcing nor research with human subjects.
        \item Including this information in the supplemental material is fine, but if the main contribution of the paper involves human subjects, then as much detail as possible should be included in the main paper. 
        \item According to the NeurIPS Code of Ethics, workers involved in data collection, curation, or other labor should be paid at least the minimum wage in the country of the data collector. 
    \end{itemize}

\item {\bf Institutional Review Board (IRB) Approvals or Equivalent for Research with Human Subjects}
    \item[] Question: Does the paper describe potential risks incurred by study participants, whether such risks were disclosed to the subjects, and whether Institutional Review Board (IRB) approvals (or an equivalent approval/review based on the requirements of your country or institution) were obtained?
    \item[] Answer: \answerNA 
    \item[] Justification: \textit{The paper does not involve crowdsourcing nor research with human subjects.}
    \item[] Guidelines:
    \begin{itemize}
        \item The answer NA means that the paper does not involve crowdsourcing nor research with human subjects.
        \item Depending on the country in which research is conducted, IRB approval (or equivalent) may be required for any human subjects research. If you obtained IRB approval, you should clearly state this in the paper. 
        \item We recognize that the procedures for this may vary significantly between institutions and locations, and we expect authors to adhere to the NeurIPS Code of Ethics and the guidelines for their institution. 
        \item For initial submissions, do not include any information that would break anonymity (if applicable), such as the institution conducting the review.
    \end{itemize}

\end{enumerate}

\end{document}